\newcommand{\R}[0]{{\ensuremath{\mathbb{R}}}}
\newcommand{\E}[0]{{\ensuremath{\mathbb{E}}}}
\newcommand{\F}{{\mathcal{F}}}
\newcommand{\ind}[1]{\mathbb{I}_{\{#1\}}}
\newcommand{\beq}{\begin{equation}}
\newcommand{\eeq}{\end{equation}}
\newcommand{\norm}[2][]{\ensuremath{\left\Vert #2 \right\Vert_{#1}}}
\newcommand{\inb}[1]{\ensuremath{\left\{#1\right\}}}
\newcommand{\inp}[1]{\ensuremath{\left(#1\right)}}
\newcommand{\insq}[1]{\ensuremath{\left[#1\right]}}
\newcommand{\ina}[1]{\ensuremath{\left\langle#1\right\rangle}}
\newcommand{\abs}[1]{\ensuremath{\left|#1\right|}}
\renewcommand{\Pr}{\ensuremath{\mathbb{P}}}
\newcommand*{\defeq}{\mathrel{\rlap{\raisebox{0.3ex}{$\m@th\cdot$}}\raisebox{-0.3ex}{$\m@th\cdot$}}=}
\newcommand*{\eqdef}{=
  \mathrel{\rlap{\raisebox{0.3ex}{$\m@th\cdot$}}\raisebox{-0.3ex}{$\m@th\cdot$}}}
\newcounter{note}[section]
\renewcommand{\sf}{\mathrm{Softmax}}
\newlist{pfparts}{description}{1}
\setlist[pfparts,1]{font=\normalfont\textsf,
  itemindent=2pt,
  wide,
  itemsep=0pt,topsep=2pt,
  labelsep=0.75ex
}
\def\namedlabel#1#2{\begingroup
    #2\def\@currentlabel{#2}\phantomsection\label{#1}\endgroup
}
\newtheorem{theorem}{Theorem}[section]
\newtheorem{definition}[theorem]{Definition}
\newtheorem{lemma}[theorem]{Lemma}
\newtheorem{fact}[theorem]{Fact}
\newtheorem{observation}[theorem]{Observation}
\newtheorem{corollary}[theorem]{Corollary}
\theoremstyle{definition}
\newtheorem{example}[theorem]{Example}
\crefname{theorem}{Theorem}{Theorems}
\crefname{observation}{Observation}{Observations}
\crefname{claim}{Claim}{Claims}
\crefname{condition}{Condition}{Conditions}
\crefname{example}{Example}{Examples}
\crefname{fact}{Fact}{Facts}
\crefname{lemma}{Lemma}{Lemmas}
\crefname{corollary}{Corollary}{Corollaries}
\crefname{definition}{Definition}{Definitions}
\crefname{remark}{Remark}{Remarks}
\crefname{proposition}{Proposition}{Propositions}
\crefname{question}{Questions}{Questions}
\crefname{problem}{Problem}{Problems}
\crefname{exercise}{Exercise}{Exercises}
\crefname{section}{Section}{Sections}
\crefname{appendix}{Appendix}{Appendices}
\title{A direct proof of a unified law of robustness for Bregman divergence losses}
\author{Santanu Das\thanks{Sanatanu Das. Tata Institute of Fundamental Research, Mumbai. Email:
   \texttt{dassantanu315@gmail.com}.}
\and Jatin Batra\thanks{Jatin Batra. Tata Institute of Fundamental
   Research, Mumbai. Email: \texttt{jatinbatra50@gmail.com}.}
   \and Piyush Srivastava\thanks{Piyush Srivastava. Tata Institute of Fundamental
   Research, Mumbai. Email: \texttt{piyush.srivastava@tifr.res.in}.}
}
\date{}
\newcommand{\D}{\mathcal{D}}
\newcommand{\phidL}{\ensuremath{L_{g}}}
\newcommand{\phiL}{\ensuremath{L_{\phi}}}
\newcommand{\omegadiam}{\ensuremath{d_{\Omega}}}
\newcommand{\compnum}{\ensuremath{r}}
\newcommand{\grantacknowledgement}{We acknowledge support from DAE, India under
  project no. RTI4001.  PS acknowledges support from Adobe Systems Incorporated
  via a gift to TIFR, from DST, India under project number MTR/2023/001547, and
  from the Infosys-Chandrasekharan Virtual Centre for Random Geometry at TIFR.
  The contents of this paper do not necessarily reflect the views of the funding
  agencies listed above.}
\begin{document}

\maketitle
{\let\thefootnote\relax\footnotetext{\grantacknowledgement}}

\begin{abstract}
    In contemporary deep learning practice, models are often trained to near zero
loss i.e. to nearly \emph{interpolate} the training data. However, the number of
parameters in the model is usually far more than the number of data points $n$,
the theoretical minimum needed for interpolation: a phenomenon referred to as
\emph{overparameterization}. In an interesting piece of work that contributes
to the considerable research that has been devoted to understand
overparameterization, Bubeck and Sellke considered a natural notion of what it means for a model to interpolate: the model is said to interpolate when the model's training loss goes below the loss of the conditional expectation of the response given the covariate. For this notion of interpolation and for a broad class of
covariate distributions (specifically those satisfying a natural notion of
concentration of measure), they showed that overparameterization is necessary for \emph{robust} interpolation i.e. if the interpolating function is required to be
\emph{Lipschitz}. Their main proof technique applies to regression with \emph{square} loss against a scalar response, but they remark that via a connection to Rademacher complexity and using tools such as the Ledoux-Talagrand contraction inequality, their result can be extended to more general losses, at least in the case of scalar response variables. In this work, we recast the original proof technique of Bubeck and Sellke in terms of a bias-variance type decomposition, and show that this view directly unlocks a generalization to Bregman divergence losses (even for vector-valued responses), without the use of tools such as Rademacher complexity or the Ledoux-Talagrand contraction principle. Bregman divergences are a natural class of losses since for these, the best estimator is the conditional expectation of the response given the covariate, and in particular, include other practical losses such as the cross entropy loss. Our work thus gives a more general understanding of the main proof technique of Bubeck and Sellke and demonstrates its broad utility.

\end{abstract}

\section{Introduction}

The recent revolution in deep learning was driven by models that are highly
\emph{overparameterized}~\cite{pmlr-v80-xiao18a,he2016deep,krizhevsky2012imagenet}, i.e. models where the number of parameters far exceeds
$n$, 
the number of training data points.\footnote{However, this may possibly not be
  as true for the current LLMs, see e.g. \cite{hoffmann2022training}, where the
  trend is to train on web-scale data of heterogenous quality.}  Since this is
the naive theoretical condition needed to interpolate the training data,
classical statistical theory suggests that this situation may make these models
susceptible to the risk of \emph{overfitting} to the idiosyncrasies of the
training data, and thereby suffer in terms of generalizing to new inputs.  On
the other hand, experience with such models suggests that such overfitting does
not tend to happen.  Understanding the mystery of overparameterization by
resolving this apparent conflict has thus attracted a lot of research, see
e.g. \cite{BSisoperimetry,yang2020rethinking,doubledescent2,neyshabur2014search}.

Another line of research focuses on \emph{(adversarial) robustness},
i.e. whether models are susceptible to small (possibly adversarially chosen)
perturbations in the input. Several existing models are known to be brittle to
adversarial perturbations
\cite{attack1,realibility1,realiability2,selfdrivingcars}, which is a major
issue for security \cite{selfdrivingcars} and reliability
\cite{realibility1,realiability2}. At the same time, understanding the nature
of adversarial perturbations is an interesting tool
\cite{understandingperturbation1,understandingpertutbation2} to understand deep
learning. In fact, in an interesting set of experiments, Madry et
al.~\cite{mkadry2017towards} gave strong evidence that overparameterization
plays an important role in adversarial robustness: their emphasis was on
training models robust to adversarial attacks and they observed that increasing the number of parameters in the model
alone helps significantly.

In a recent line of work, Bubeck and Sellke \cite{BSisoperimetry} proved an
exciting theorem that the requirement of robustness can be used to
\emph{explain} overparameterization in a certain sense, and they used their
theorem to explain the experimental results of Madry et
al.~\cite{mkadry2017towards}. They considered the interpolation task on $n$ data
points (i.e. fitting the data to ``very small'' loss), and any ``smoothly''
parameterized class of models, which includes neural networks under certain
boundedness assumptions. (Note that the interpolation task deals with only the
training data, with no concern for generalization: since modern deep learning
models are often trained to near zero loss \cite{zeroloss1}, this setup is
nonetheless an interesting setup.) They showed that overparameterization is
\emph{necessary} for a certain notion of robustness for the interpolation task,
namely, a low Lipschitz constant for the model.  More precisely, for
$d$-dimensional covariates (assuming certain concentration properties on the
covariate distribution) and models with $p$ parameters, the Lipschitz constant
of any model that interpolates the training data must be at least
$\Omega(\sqrt{nd/p})$, with high probability over the choice of the training
data (for precise notions of smooth parameterization, covariate distribution
assumptions and interpolation, see \Cref{sec:prelim}).

The main proof technique of \cite{BSisoperimetry} is designed to understand interpolation
with \emph{square} loss, although \cite{BSisoperimetry} also sketches a proof for general Lipschitz losses (at least for the case of scalar responses) by exploiting a connection to Rademacher complexity and then applying the Ledoux-Talagrand contraction principle~(see \cite[Theorem 4.12]{ledouxProbabilityBanachSpaces1991}, which improves upon \cite[Theorem 5]{ledouxComparisonTheoremsRandom1989a}).
It is, however, also important to understand in a similar depth as the square loss other practically motivated losses. For example, the experiments of \cite{mkadry2017towards}
(discussed above) use the cross-entropy loss for the classification
problem on MNIST and CIFAR10. Some other often used problem-dependent losses are logistic loss \cite{usesoflogsiticloss}, KL divergence loss
\cite{mkadry2017towards}, Mahalanobis loss \cite{mahalonobisloss}, etc. In this work, we seek a more complete understanding of the main technique of
\cite{BSisoperimetry} beyond the square loss, without the detour through tools such as Rademacher complexity and the Ledoux-Talagrand contraction principle.

\subsection{Our Contribution}
Looking for a generalization of the main proof technique of Bubeck and
Sellke~\cite{BSisoperimetry} from the square loss to more general losses
presents two conflicting requirements: on the one hand, we would like to
generalize to a sufficiently rich family of losses; on the other hand, we would need this family of losses to share with the square loss those properties which make the notion of interpolation of \cite{BSisoperimetry} make sense and their main technique work.

One of the many nice properties enjoyed by the square loss is that the optimal
predictor of an observation $Y$ with respect to this loss, given a covariate
$X$, has a crisp characterization: it is the conditional expectation
$\E\insq{Y|X}$ (see, e.g., \cite[Sections 9.3-9.4]{williams91:_probab}).
As highlighted in more detail towards the end of this section, our main
observation is that this simple fact and its consequences play a central
role in the interpolation notion and the main technique of Bubeck and Sellke~\cite{BSisoperimetry}.

This observation leads us to the class of \emph{Bregman divergence losses}.  The
Bregman divergence $D_{\phi}$ on $\R^K$, corresponding to a differentiable
convex function $\phi: \R^K \rightarrow \R$ is given by
\begin{displaymath}
  D_{\phi}(y_1, y_2) = \phi(y_1) - \phi(y_2) - \ina{\nabla\phi(y_2), y_1 - y_2}.
\end{displaymath}
This is a rich and often-used family of losses (which are not necessarily
metrics; see~\cref{sec:prelim} for examples), and includes, in particular,
losses such as the square loss and the cross-entropy loss as special cases.
Importantly for our purposes, this family shares with the square loss the same
optimal predictor: it was shown by Banerjee, Guo and
Wang~\cite{banerjee2005optimality} that this is essentially the class of losses
for which the conditional expectation is the optimal predictor.

In this work, we show that this conceptual property is sufficient: we generalize
the main technique of Bubeck and Sellke~\cite{BSisoperimetry} to \emph{Bregman
  divergence} losses. As described above, these include the square loss and
several other commonly used losses (see \cref{sec:prelim} for definitions).  We
now proceed to an informal technical account of the result and the proof
techniques; the formal statements can be found in \cref{sec:proof-main-theorem}.

Extending the notion of \cite{BSisoperimetry}, in our setup, a function $f$ is
said to $\epsilon$-overfit a set of samples with respect to a given loss if its
empirical loss over the samples is at least $\epsilon$ lower than the minimum expected loss over the distribution of any function of the covariate.
As in \cite{BSisoperimetry}, our regularity condition on the distribution of the
covariates is that it should be a mixture of distributions satisfying measure
concentration analogous to a normalized high-dimensional Gaussian: a condition
that is referred to in \cite{BSisoperimetry} as being a mixture of isoperimetric
distributions.  We defer the precise definitions of these technical terms to
\cref{sec:prelim}, and proceed to give an informal statement of our main result.
\begin{theorem}[\textbf{Main theorem (informal, see
    \cref{maintheorem})}]\label{informal-maintheorem}
  Let $\Omega$ be a compact convex subset of $\R^K$ for some $K > 0$ and let
  $\phi: \Omega \rightarrow \R$ be a continuously differentiable strictly convex
  function.  Let $D_{\phi}$ denote the corresponding Bregman divergence loss.
  For $\Delta \subseteq \R^{d}$, let $\D$ be a probability distribution on
  $\Delta \times \Omega$ such that its marginal $\D_X$ on
  $\Delta \subseteq \R^d$ is a mixture of \compnum{} isoperimetric
  distributions.  Let $(X_i, Y_i)_{i=1}^n$ be $n$ i.i.d samples from $\D$. Let
  $\F$ be a family of functions that admits a bounded Lipschitz parameterization
  with $p$ parameters. If $n \geq \tilde{O}(K^2\compnum/\epsilon^2)$, then
  w.h.p.  over the random choice of these samples, the Lipschitz constant $L$ of
  any function $f \in \F$ that $\epsilon$-overfits these samples with respect to
  $D_{\phi}$ satisfies
  \begin{equation}
    \label{eq:main-bound-informal}
    L\geq \frac{O(1)\cdot\epsilon\sqrt{nd}}{K
      \sqrt{p\log (1+O(\sqrt{K})/\epsilon)}
    }.
  \end{equation}
  Here, the hidden constant factors depend upon the properties of $\phi$ and the
  Lipschitz parameterization of $\F$.
\end{theorem}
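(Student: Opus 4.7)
The plan is to recast the Bubeck--Sellke argument in terms of the Bregman analogue of the bias--variance decomposition. Writing $g^\ast(x) \defeq \E\insq{Y \mid X = x}$, the Banerjee--Guo--Wang characterization makes $g^\ast$ the minimizer of the expected Bregman loss and yields the Pythagorean identity
\begin{equation*}
  \E\insq{D_\phi(Y, f(X))} = \E\insq{D_\phi(Y, g^\ast(X))} + \E\insq{D_\phi(g^\ast(X), f(X))},
\end{equation*}
together with its sample-level refinement, the three-point rule
\begin{equation*}
  D_\phi(Y, f(X)) = D_\phi(Y, g^\ast(X)) + D_\phi(g^\ast(X), f(X)) + \ina{\gr\phi(g^\ast(X)) - \gr\phi(f(X)),\, Y - g^\ast(X)}.
\end{equation*}
Applying this identity to the training samples, invoking the $\epsilon$-overfitting hypothesis, and discarding the non-negative term $\frac{1}{n}\sum_i D_\phi(g^\ast(X_i), f(X_i))$ would reduce the problem to an inequality of the form $A + T(f) \leq -\epsilon$, where $A \defeq \tfrac{1}{n}\sum_i D_\phi(Y_i, g^\ast(X_i)) - \E\insq{D_\phi(Y, g^\ast(X))}$ is $f$-independent, and $T(f) \defeq \tfrac{1}{n}\sum_i \ina{\gr\phi(g^\ast(X_i)) - \gr\phi(f(X_i)),\, Y_i - g^\ast(X_i)}$ is a ``noise inner product''.

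The term $A$ is a sample mean of a bounded random variable that does not depend on $f$, so Hoeffding yields $|A| \leq \epsilon/2$ with high probability once $n$ meets the stated sample complexity. It therefore suffices to prove that, with high probability over the samples, no $L$-Lipschitz $f \in \F$ makes $|T(f)| \geq \epsilon/2$ unless $L$ is at least as large as the claim. Conditional on $(X_i)$, the summands in $T(f)$ are independent and mean-zero in $Y$ (since $\E[Y_i \mid X_i] = g^\ast(X_i)$), so a Bernstein-style inequality yields pointwise concentration governed by the empirical second moment of $\|\gr\phi(g^\ast(X_i)) - \gr\phi(f(X_i))\|^2$. I would then transfer the pointwise bound to all of $\F$ via a standard covering argument: the bounded Lipschitz parameterization produces an $\eta$-net of the parameter space of size $N \leq (C/\eta)^p$, and controlling $|T(f) - T(f')|$ for nearby parameters uses only the boundedness of $\gr\phi$ on the compact $\Omega$ and the parameterization's Lipschitz constant. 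Union-bounding over the net contributes the $\sqrt{p\log(1/\eta)}$ factor visible in the denominator of \eqref{eq:main-bound-informal}.

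The crucial step, and the place where the $\sqrt{d}$ in the numerator is earned, is an isoperimetric control of $h_f \defeq \gr\phi\circ g^\ast - \gr\phi\circ f$. The $f$-dependent part $\gr\phi\circ f$ is $O(L)$-Lipschitz, since $\gr\phi$ is Lipschitz on the compact $\Omega$ (on which $\phi$ is continuously differentiable), so isoperimetric concentration inside each of the $r$ mixture components of $\D_X$ forces $\|h_f(X) - \E[h_f(X)]\|$ to have typical size $O(L/\sqrt d)$. Combined with the fact that any $\epsilon$-overfitting $f$ must already be close to $g^\ast$ in the Bregman sense $\E[D_\phi(g^\ast, f)] = O(\epsilon)$, which, using strict convexity of $\phi$ on the compact $\Omega$, forces $\E[h_f(X)]$ itself to be small, this shrinks the Bernstein variance of $T(f)$ from scale $L^2/n$ down to scale $L^2/(nd)$ and yields the advertised $\sqrt{nd}$ scaling in the final bound. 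The factor $K$ enters through Cauchy--Schwarz in the $K$-dimensional inner products and the vector-valued nature of $\gr\phi$, while the factor $r$ enters through the union bound over mixture components.

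The principal obstacle is precisely this last step: propagating isoperimetric concentration through the Bregman-specific, vector-valued quantities $h_f$ in a way that is genuinely uniform in $f$, while simultaneously keeping the non-vanishing mean $\E[h_f(X)]$ under control via the overfitting hypothesis. In the square-loss case treated by Bubeck--Sellke, $\gr\phi(y) = 2y$ and so $h_f = 2(g^\ast - f)$ is immediately $O(L)$-Lipschitz without any chain rule; for a general Bregman divergence one must route the argument through $\nabla^2\phi$ on $\Omega$ and the $K$-dimensional geometry, and this is where the additional $K$ factors in both the sample-complexity requirement $n \gtrsim K^2 \compnum/\epsilon^2$ and in the denominator of \eqref{eq:main-bound-informal} originate.
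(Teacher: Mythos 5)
Your decomposition is the right starting point and matches the paper's \cref{lem:decompose} at the level of identities: your sample-level three-point rule is exactly the triangle equality \cref{eq:br-triangle} expanded around $z=\E[Y|X]$, your $A$ is the paper's $\Phi_2$, and your $T(f)$ is the paper's $\Gamma_1+\Gamma_2+\Gamma_3$ lumped into one term. However, there are two substantive gaps, and they are connected.

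First, the claim that ``any $\epsilon$-overfitting $f$ must already be close to $g^\ast$ in the Bregman sense $\E[D_\phi(g^\ast,f)]=O(\epsilon)$'' is unjustified and, as a deterministic implication, false. The $\epsilon$-overfitting hypothesis only bounds the \emph{empirical} loss $\frac{1}{n}\sum_i D_\phi(Y_i,f(X_i))$ from above; since $\E[D_\phi(Y,f(X))]=\sigma_\phi^2+\E[D_\phi(g^\ast(X),f(X))]$ by the Pythagorean identity, overfitting by a function far from $g^\ast$ in population merely means the empirical deviation is even larger, and ruling out such deviations with high probability is precisely what the theorem is trying to prove. You cannot assume closeness to $g^\ast$ as an input. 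Because you feed this (false) closeness into the Bernstein variance through $\E[h_f]$ being small, the $\sqrt{d}$ gain in your final bound is not actually established: in general $\|\E[h_f]\|$ is $\Theta(1)$, so your empirical-second-moment control of $T(f)$ reverts to a $1/\sqrt{n}$ rate rather than $L/\sqrt{nd}$.

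Second, and this is where the paper's extra splitting earns its keep, there is a circularity in conditioning on $(X_i)$ to run Bernstein in $Y$ while simultaneously invoking isoperimetry of $\D_X$, which uses the randomness of $X$. The paper resolves both issues by splitting $T(f)$ into $\Gamma_1$ (which is $f$-independent and concentrates by Hoeffding), $\Gamma_2 = -\ina{\frac{1}{n}\sum_i(Y_i-\E[Y_i|X_i]),\,\E[\nabla\phi(f(X))]}$ (which captures exactly the nonvanishing ``mean'' piece $\E[h_f]$ you were worried about: here $\E[\nabla\phi(f(X))]$ is a bounded \emph{deterministic} vector, so its effect is controlled uniformly over $f$ by concentrating the $f$-independent noise vector $\frac{1}{n}\sum_i(Y_i-\E[Y_i|X_i])$ alone, via \cref{eq-vector} --- no net, no isoperimetry, no claim that $\E[h_f]$ is small, just $n \gtrsim K^2 r/\epsilon^2$), and $\Gamma_3$ (the centered piece, $-\ina{Y-\E[Y|X],\,\nabla\phi(f(X))-\E[\nabla\phi(f(X))]}$), which is where isoperimetry and the net argument live. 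For $\Gamma_3$, rather than conditioning and Bernsteinizing, the paper notes that $Y-\E[Y|X]$ is bounded, $\nabla\phi(f(X))-\E[\nabla\phi(f(X))]$ is sub-Gaussian with parameter $O(L\phidL/\sqrt d)$ by isoperimetry, and the two are orthogonal in expectation; \cref{fct-multiply-sub-gauss} then makes the product sub-Gaussian (unconditionally), and plain Hoeffding finishes. This is a simpler and genuinely unconditional route to the $\sqrt{nd}$ scaling than the Bernstein-plus-localization you sketch.

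A smaller point: ``$\nabla\phi$ is Lipschitz on the compact $\Omega$ (on which $\phi$ is continuously differentiable)'' does not follow from $\phi\in C^1(\Omega)$; a $C^1$ function on a compact set has bounded, uniformly continuous, but not necessarily Lipschitz gradient. The paper makes $\nabla\phi$ being $\phidL$-Lipschitz coordinate-wise an explicit hypothesis (\cref{item:phi-grad-lip} of \cref{maintheorem}), satisfied for instance when $\phi\in C^2$.
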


\noindent\textbf{Comparison to Bubeck and Sellke \cite{BSisoperimetry}.}

\emph{Approach of Bubeck and Sellke \cite{BSisoperimetry} in our setting.} Bubeck and Sellke \cite{BSisoperimetry} sketched a proof of how a Rademacher complexity view of their main technique yields generalization error bounds via Ledoux-Talagrand contraction \cite{shalev2014understanding,Wainwright_2019}, and remarked in passing that this view yields laws of robustness for general (Lipschitz) losses for scalar responses. Extending this view to the case of vector-valued response turns out to be trickier since the usual notion of Rademacher complexity does not enjoy a contraction principle \cite[Section 6]{maurer2016vector}. One must resort to coordinate-wise Rademacher complexity for which contraction does hold \cite{maurer2016vector}, or employ a lossy conversion from Rademacher to Gaussian complexity followed by contraction via Slepian's Lemma \cite{bartlett2002rademacher}.

\emph{Our proof technique.} Our approach side-steps use of contraction principles by recognizing
that a bias-variance like decomposition lies at the heart of the main proof technique of Bubeck
and Sellke~\cite{BSisoperimetry}.  Using the simple but important fact that
among all $X$-measurable predictors for a random variable $Y$, the conditional
expectation $\E[Y|X]$ minimizes not just the square loss, but any Bregman
divergence loss (see, e.g., \cref{thm-best-bregman} from the work of Banerjee,
Guo and Wang~\cite{banerjee2005optimality} below), we are able to replace this
with a more general decomposition (see \cref{lem:decompose,lem-Gamma-3} and
\cref{sec:extension-mixtures} below).  After this conceptual modification, we show that a structure similar to that of the main technique of Bubeck and
Sellke~\cite{BSisoperimetry} yields an elementary proof of the law of robustness for Bregman divergence losses, even for vector-valued responses. (The technical details of implementing the strategy are necessarily somewhat
different because of the more general decomposition that we use.) Further, as we elaborate below, this more direct and elementary approach gives more flexibility for obtaining finer-grained laws of robustness for specific losses.

\emph{Remark on loss classes.} While Bubeck and Sellke sketch an approach to go beyond square losses to Lipschitz losses, our proof is about Lipschitz Bregman divergence losses. While this may make our approach seem restrictive at first glance, as stated earlier, Bregman divergence losses are essentially the class of losses for which the notion of interpolation of Bubeck and Sellke \cite{BSisoperimetry} can be defined \emph{independently of the function class} (since the conditional expectation $\mathbb{E}[Y|X]$, which is the best estimator for these losses, depends only on the data distribution and not on the function class), and hence are the natural class of losses for Bubeck and Sellke's notion of interpolation. In \cref{sec:spec-result-spec}, we
 present corollaries of the main theorem for a couple of specific losses (including cross-entropy loss for vector valued responses and also
 the case of the square loss already considered in \cite{BSisoperimetry}). Our approach is flexible enough to be adapted to provide stronger bounds in specific cases, as we show for the cross-entropy loss in \cref{cor-classification}. For details, please refer to the remark after \cref{cor-classification}.

\subsection{Related work}

\paragraph{Adversarial robustness experiments and overparameterization} Several
works other than \cite{mkadry2017towards} have studied experimentally the
relationship between model capacity and adversarial robustness. (Model capacity, in this context, is an informal notion that tries to capture how rich a class of functions the model captures; in studies involving neural networks, quantities such as the number of learnable parameters in the network are typically used to quantify model capacity.)  Liu et
al.~\cite{liu2020loss} studied the loss landscape of adversarial training and
observed easier adversarial training in large capacity models. Similar
observations for model capacity and adversarial robustness were made by Kurakin,
Goodfellow and Bengio~\cite{kurakin2016adversarial} and Xie and
Yuille~\cite{xie2019intriguing}.

\paragraph{Other theoretical works} There is a rich body of recent theoretical
work on different aspects of the link between robustness and neural network
parameterizations along different lines; here we mention a few. Bubeck, Lee, and
Nagaraj \cite{bubeck2021law} introduced the idea of using the Lipschitz constant
to measure robustness in the interpolation regime and formed the foundation for
the work of Bubeck and Sellke \cite{BSisoperimetry}.  For the square loss, Wu et al.~\cite{wu_law_2023} give a weaker law of robustness while relaxing the condition that the covariate distribution follows the isoperimetry condition of Bubeck and Sellke. Zhu et
al.~\cite{zhu2022robustness} gave theoretical results for robustness along more
fine-grained lines of depth, width, and initialization. Gao et
al.~\cite{gao2019convergence} and Zhang et al.~\cite{zhang2020over} studied
adversarial training and overparameterization from the perspective of
convergence of gradient descent. Gao et al.~\cite{gao2019convergence} also
showed interesting lower bounds for the VC dimension of a model class that can
robustly interpolate arbitrary well-separated data. Another interesting line of
work is studying the relationship between overparameterization and robust
\emph{generalization} (i.e. quantities such as $\mathbb{E}[\sup_{x\in
  \mathcal{N}(X)}\ell(f(X),Y)]$, where $\mathcal{N}(x)$ denotes a neighborhood of $x$, and \(\ell\) the
loss function), rather than merely robust interpolation. Recent works such as \cite{cranko2021generalised,li2022robust} have studied the correlation of the Lipschitz constant of the model with the robust generalization properties of the model; see also the survey \cite{zuhlke2025adversarial}. More broadly, Hassani and Javanmard~\cite{hassani2024curse} gave a precise analysis of
robust generalization for random features regression.

\section{Preliminaries}\label{sec:prelim}
\paragraph*{Notation}  We denote the standard Euclidean norm of a vector \(v\)
in \(\R^d\) as \(\norm{v}\).  The standard inner product of vectors \(u, v \in
\R^d\) is denoted \(\langle u, v \rangle \).

\paragraph{Bregman divergence losses}
To unify different kinds of losses such as square loss and cross-entropy loss, we will use the notion of Bregman divergence losses \cite{bregmanRelaxationMethodFinding1967} which we now describe.
\begin{definition}[\textbf{Bregman divergence}]
  Given a convex set $\Omega \subset \R^K$, let
  $\phi:\Omega\rightarrow \mathbb{R}$ be a strictly convex continuously
  differentiable function defined on $\Omega$.  Then, the \emph{Bregman
    divergence} $D_{\phi}:\Omega\times \Omega\rightarrow \mathbb{R}$ between two
  points $x,y\in \Omega$ is defined as
\begin{equation}\label{def:breg}
  D_{\phi}(x,y)=\phi(x)-\phi(y)-\langle  \nabla \phi(y)\; , \; x-y \rangle,
\end{equation}
where $\ina{\cdot, \cdot}$ denotes the standard inner product on
$\mathbb{R}^{K}$.
\end{definition}
The Bregman divergence between two points may be viewed as a measure, depending
upon the function $\phi$, of the distance between $x$ and $y$. It is however not
a metric in general. It is well known that several commonly used losses may be
expressed as a Bregman divergence for an appropriate choice of $\phi$: we recall
some examples below.
\begin{example}\label{ex:bregman}
The following losses can be expressed as Bregman divergences:
\begin{enumerate}
    \item \textbf{Square loss.} For $\phi : \R^d \rightarrow \R$ given as
      $\phi(x) = \|x\|^2$, $D_{\phi}(y,\hat{y}) = \|\hat{y}-y\|^2$, the square
      loss for regression.
     \label{item:sq-loss}
    \item \textbf{Mahalanobis loss.} More generally, for $\phi : \R^d \rightarrow \R$ given as $\phi(x) = x^{T}Ax$, where $A$ is a positive definite matrix, $D_\phi(y, \hat{y})=(y-\hat{y})^T A(y-\hat{y})$, the Mahalanobis  loss for regression. Note that Mahalanobis loss is a symmetric Bregman divergence loss.
    \item \textbf{KL-divergence and cross-entropy loss.} Let $\Delta_{K}$ denote
      the probability simplex in $K$ dimensions.  Then, for
      $\phi : \Delta_K \rightarrow \R$ given as
      $\phi(x) = \sum^K_{i} x_i \log x_i$,
      $D_{\phi}(y,\hat{y}) = KL(y,\hat{y})$. For a 1-hot vector $y$ (i.e.
      $y_i = 1$ for some $i$ and $y_j = 0$ for $j \neq i$) and
      $\hat{y} \in (0,1]^K \cap \Delta_K$, we slightly abuse notation and write
      $D_{\phi}(y,\hat{y})$ in the limit of the non-1 coordinates of $y$ tending
      to zero, and obtain
      $D_{\phi}(y,\hat{y}) = -\sum^K_{i=1}\mathbb{I}_{y_i=1}\log(\hat{y}_i)$,
      the cross-entropy loss for $K$-class classification. Note that both these losses are asymmetric
      Bregman divergence losses.\label{item:kl-div}
    \item \textbf{Logistic loss.} This may be seen as a special case of the
      previous example.  Consider the binary classification problem such that
      true label $y \in\{0,1\}$ and predicted probability $\hat{y}
      \in(0,1)$. For $\phi : [0,1] \rightarrow \R$ given as
      $\phi(p)=p \log (p)+(1-p) \log (1-p)$,
      $D_\phi(y, \hat{y})=-(y \log (\hat{y})+(1-y) \log (1-\hat{y})) $, the
      logistic loss for classification.
\end{enumerate}
\end{example}

Next, we state some standard and easily verified properties of the Bregman
divergence that we use frequently.
\begin{enumerate}
\item \textbf{(Continuity).}  $D_{\phi}$ is a continuous function on
  $\Omega \times\Omega$.\label{item:d-cont}
\item \textbf{(Nonnegativity).} $D_{\phi}(x,y)\geq 0$ for all $x,y$ and
  $D_{\phi}(x,y)= 0$ iff $x=y$: this is essentially equivalent to the strict
  convexity of $\phi$.
\item\textbf{(Triangle equality).} \label{item:d-triangle} For any $x,y,z\in \Omega$ the following holds
  \begin{equation}\label{eq:br-triangle}
    D_{\phi}(x,y)=D_{\phi}(x,z)+D_{\phi}(z,y)-\langle  x-z\; , \; \nabla \phi(y)-\nabla \phi(z) \rangle.
  \end{equation}

\end{enumerate}
We will also use the following Theorem of Banerjee, Guo and
Wang~\cite{banerjee2005optimality}.  Note that for the case of ``well-behaved''
random variables, this is a direct consequence of the properties above
(especially of \cref{item:d-triangle}).
\begin{theorem}[\textbf{{\cite[Theorem 1]{banerjee2005optimality}}}]Let
  $(\Omega^{o}, \mathcal{F}, P)$ be an arbitrary probability space, let $X$ be a
  random variable taking values in $\mathbb{R}^d$ and $\mathcal{G}$ be a sub-
  $\sigma$-algebra of $\mathcal{F}$ generated by $X$. Let $Y$ be any
  $\mathcal{F}$-measurable random variable taking values in $\mathbb{R}^K$ for
  which both $\E[Y]$ and $\E[\phi(Y)]$ are finite. Then, among all
  $\mathcal{G}$-measurable random variables of the form $f(X)$ such that
  $f:\mathbb{R}^{d}\rightarrow \mathbb{R}^{K}$, the conditional expectation is
  the unique minimizer (up to a.s. equivalence) of the expected Bregman
  divergence loss, i.e.,
\begin{equation}\label{exsec}
\mathop{\arg\min}\limits_{f:\mathbb{R}^{d}\rightarrow \mathbb{R}^{K}} \E\left[D_\phi(Y,f(X))\right]=\E[Y|X].
\end{equation}\label{thm-best-bregman}
\end{theorem}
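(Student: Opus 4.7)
The plan is to exploit the triangle equality for Bregman divergences (\cref{item:d-triangle}), using the conditional expectation $Z \defeq \E[Y \mid X]$ as a pivot. Substituting $x = Y$, $y = f(X)$, and $z = Z$ into \eqref{eq:br-triangle} yields, almost surely, the pointwise identity
\begin{equation*}
  D_\phi(Y, f(X)) = D_\phi(Y, Z) + D_\phi(Z, f(X)) - \langle Y - Z,\; \nabla\phi(f(X)) - \nabla\phi(Z)\rangle.
\end{equation*}
This mirrors the classical bias--variance decomposition for the square loss: it isolates a term that depends on the candidate $f$ (the middle term) and a cross term that I will show integrates to zero.

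Taking expectations of both sides, the key step is that the cross term vanishes. Since $f(X)$ and $Z$ are both $\mathcal{G}$-measurable, so is the vector $\nabla\phi(f(X)) - \nabla\phi(Z)$. Hence, by the tower property together with the defining property $\E[Y - Z \mid \mathcal{G}] = 0$,
\begin{equation*}
  \E\bigl[\langle Y - Z,\; \nabla\phi(f(X)) - \nabla\phi(Z)\rangle\bigr]
  = \E\bigl[\langle \E[Y - Z \mid \mathcal{G}],\; \nabla\phi(f(X)) - \nabla\phi(Z)\rangle\bigr] = 0.
\end{equation*}
This gives the clean decomposition
\begin{equation*}
  \E[D_\phi(Y, f(X))] = \E[D_\phi(Y, Z)] + \E[D_\phi(Z, f(X))].
\end{equation*}
Since Bregman divergences are nonnegative and $\E[D_\phi(Y,Z)]$ does not depend on $f$, the left-hand side is minimized exactly when $\E[D_\phi(Z, f(X))] = 0$. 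Strict convexity of $\phi$ forces $D_\phi(a,b) = 0 \Rightarrow a = b$, so the minimum is attained if and only if $f(X) = \E[Y \mid X]$ almost surely, which gives both optimality and uniqueness.

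The main obstacle is purely technical integrability: under only $\E[\|Y\|] < \infty$ and $\E[\phi(Y)] < \infty$, the random vectors $\nabla\phi(f(X))$ and $\nabla\phi(Z)$ need not be integrable, so the inner-product appearing in the pointwise identity is not a priori well-defined in expectation. I would handle this by first restricting attention to competitors $f$ for which $\E[D_\phi(Y, f(X))] < \infty$ (any $f$ failing this obviously cannot improve on $Z$), and then approximating such $f$ by $f_n$ taking values in compact subsets of the interior of $\Omega$, on which $\nabla\phi$ is bounded. The decomposition holds cleanly for each $f_n$, and a monotone/dominated convergence argument exploiting the nonnegativity of the two $D_\phi$ terms lets me pass to the limit to recover the general statement.
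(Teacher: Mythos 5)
Your proposal is correct and matches what the paper itself indicates: the paper does not prove this result (it cites it from Banerjee, Guo, and Wang), but explicitly remarks that for ``well-behaved'' random variables it is a direct consequence of the Bregman triangle equality, which is precisely the decomposition you use. Your handling of the cross term via the tower property and the reduction of minimality to $\E[D_\phi(\E[Y\mid X], f(X))] = 0$ is the standard argument, and flagging the integrability issue as the only nontrivial technicality is the right call.
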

\paragraph{Realistic function classes}
Bubeck and Sellke \cite{BSisoperimetry} considered the following notion of
function classes for interpolating given data. We use the same notion and name
these classes as \emph{realistic} function classes.

\begin{definition}[\textbf{Realistic function class}]
  Let $\Delta \subseteq \R^d$ and $\Omega \subseteq \R^K$ be compact sets.  A
  class $\mathcal{F}$ of function from $\Delta$ to $\Omega$ is said to be a
  \emph{$(p,J)$-realistic} function class if $\F$ admits a
  $J$-Lipschitz-parametrization by $p$ parameters. Formally, there exists a
  compact set $B_p \subseteq \R^p$ and a map $\tau: B^p \rightarrow \F$ such
  that for all $w_1, w_2 \in B_p$ and all $x \in \Delta$,
\begin{equation}\label{eq:lipparam}
  \norm[2]{\tau(w_1)(x) - \tau(w_2)(x)} \leq J \norm[2]{ w_1 - w_2}.
\end{equation}
The set $B_p$ is called the \emph{parameter domain} of $\F$, the set $\Delta$ is
called the \emph{input domain} of $\F$, and the set $\Omega$ is called the
\emph{co-domain} of $\F$.
\end{definition}

We now observe that both regression and classification for bounded domains using
neural networks with bounded parameters can be modeled using $(p,J)$-realistic
function classes.

\begin{example}[Neural networks for regression.]\label{ex:reg}
Let $B_p$ be a subset of $\R^p$ bounded in some unspecified norm and $\mathcal{F}_1$ be the class of $p$-parameter neural networks with parameters in $B_p$. Let $\mathcal{X}$ be a bounded domain for the covariates $x \in \R^d$. Then, as in \cite{BSisoperimetry}, the natural map mapping parameter space to neural networks $\psi_1 : B_p \rightarrow \mathcal{F}_1$ satisfies \eqref{eq:lipparam} for all $w_1,w_2 \in \R^p$ and $x \in \mathcal{X}$, for some choice of $J$.
\end{example}

\begin{example}[Neural networks for classification.]\label{ex:class}
Let $B_p$ be a subset of $\R^p$ bounded in some unspecified norm and $\mathcal{X}$ be a bounded domain for covariates. Construct $\mathcal{F}_2$ by applying the $\sf$ operator on $\mathcal{F}_1$ as in Example \ref{ex:reg} (hence $\mathcal{F}_2$ has range in $\Delta^K$). Then, since $\sf$ is Lipschitz (specifically with Lipschitz constant as 1 for $\ell_2$ norms on the input and output), $\sf \circ \psi_1$ gives a $J$-Lipschitz parameterization for $\mathcal{F}_2$.
\end{example}

\paragraph{Concentration of measure}
We will need the notion of \emph{sub-Gaussian} random variables.  A random
variable $X$ with mean $\mu$ is said to have \emph{sub-Gaussian} parameter
$\sigma > 0$ if for all real $\lambda$, it holds that
$\E[\exp(\lambda (X - \mu))] \leq \lambda^2\sigma^2/2$.  Thus, if
$X_1, X_2, \dots, X_n$ are independent sub-Gaussian random variables with
parameters $\sigma_1, \sigma_2, \dots, \sigma_n$, then their sum is also
sub-Gaussian with parameter $\sqrt{\sum_{{i=1}}^n\sigma_i^2}$.  If $X$ is
sub-Gaussian with parameter $\sigma$ then the \emph{Hoeffding inequality} states
that for all $t > 0$, it holds that
\begin{equation}
  \Pr\insq{X \leq \E[X] - t} \leq \exp\inp{-\frac{t^2}{2\sigma^2}}. \label{eq:hoeffding-general}
\end{equation}
It is also well known that if a random variable $X$ has support in the interval
$[a, b]$, then it is sub-Gaussian with parameter $\frac{b-a}{2}$.  Combined with
the above discussion this leads to the usually stated form of Hoeffding's
inequality: if $X_1, X_2, \dots X_n$ are i.i.d. copies of such an $X$ then for
all $t > 0$,
\begin{equation}
  \label{eq:hoeffding}
  \Pr\insq{\frac{1}{n}\sum_{i=1}^nX_i < \E[X]  -t} \leq \exp\inp{-\frac{2nt^2}{(b-a)^2}}.
\end{equation}
It is well known that the assumption that \(X_1, X_2, \dots, X_n\) are i.i.d can
be relaxed so that one gets the following result (usually called \emph{Azuma's
  inequality}): if \(Y_0 = 0, Y_1, Y_2, \dots, Y_n\) form a martingale sequence
(with respect to some filtration) such that \(\abs{Y_i - Y_{i-1}} \leq c\) holds
with probability \(1\) for each \(1 \leq i \leq n \), then
\(\Pr\insq{Y_n \leq -nt} \leq \exp\inp{-\frac{nt^2}{2c^2}}\).  Via the Doob martingale
construction, this inequality leads to the \emph{bounded differences
  inequality}, one special case of which is the
following~\cite{yurinskiiExponentialBoundsLarge1974}: let
\(V_1, V_2, \dots V_n\) be independent mean zero random vectors in \(\R^d\) such
that \(\norm{V_i} \leq b\) holds with probability \(1\) for \(1 \leq i \leq n\).  Then
for every \(t > 0\),
\(\Pr\insq{\norm{\sum_{i=1}^nV_i} \geq \E\insq{\norm{\sum_{i=1}^nV_i}} + nt} \leq
\exp\inp{-nt^2/(8b^2)}.\) Using the fact that the \(V_i\) are independent and
have mean zero, one has \(\E\insq{\norm{\sum_{i=1}^nV_i}} \leq b\sqrt{n}\).  Combining
this with a bit of algebra, the above inequality can be simplified to the
following:\footnote{See \cite{yurinskiiExponentialInequalitiesSums1976} and
  \cite[Section 6.3]{ledouxProbabilityBanachSpaces1991} for a history of
  related, more sophisticated inequalities and
  \cite{hayesLargedeviationInequalityVectorvalued2003} and \cite[Theorem
  12]{grossRecoveringLowRankMatrices2011} for related statements.}
\begin{equation}
  \label{eq-vector}
  \Pr\insq{\norm{\frac{1}{n}\sum_{i=1}^nV_i} \geq t} \leq  2\exp\inp{-\frac{nt^2}{16b^2}}
  \text{ for all \(t \geq 0\)}.
\end{equation}

We also record the following well-known fact.
\begin{fact}\label{fct-multiply-sub-gauss}
  There exists a positive constant $C$ such that the following is true. If $X$
  is a mean-zero random variable with sub-Gaussian parameter $\sigma$, and $Z$
  is any random variable (not necessarily independent of $X$) such that
  $\E[ZX] = 0$ and $\abs{Z} \le M$ a.s.{}, then $ZX$ has sub-Gaussian parameter
  at most $C M\sigma$.
\end{fact}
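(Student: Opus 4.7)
}
The plan is to go through the well known equivalence, up to universal constants, between the MGF-based definition of sub-Gaussianity used in the paper and the moment growth characterization $\E[|Y|^p]^{1/p} \le c\,\tau\sqrt{p}$ (see, e.g., Vershynin, \emph{High-Dimensional Probability}, Proposition~2.5.2). I prefer this route because $Z$ and $X$ are not assumed independent, so there is no easy way to factorize the MGF of $ZX$ through a conditioning argument. The moment characterization, by contrast, plays very nicely with the pointwise bound $\abs{ZX} \le M\abs{X}$ that is available to us from $\abs{Z} \le M$ a.s.

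Concretely, I would proceed in three steps. First, from the assumption that $X$ is mean-zero and sub-Gaussian with parameter $\sigma$, extract a moment bound of the form $\E[\abs{X}^p]^{1/p} \le c_1 \sigma \sqrt{p}$ valid for every integer $p \ge 1$, where $c_1$ is an absolute constant; this is the easier direction of the equivalence, obtained by bounding tail probabilities via the Chernoff method and then integrating. Second, using $\abs{Z}\le M$ a.s., monotonicity of expectation gives $\E[\abs{ZX}^p]^{1/p} \le M \cdot \E[\abs{X}^p]^{1/p} \le c_1 M\sigma \sqrt{p}$ for every $p \ge 1$. Third, combine this moment bound with the hypothesis $\E[ZX] = 0$ and invoke the converse direction of the equivalence to conclude that $ZX$ satisfies $\E[\exp(\lambda ZX)] \le \exp(\lambda^2 C^2 M^2 \sigma^2/2)$ for every $\lambda \in \R$, for an absolute constant $C = c_1 c_2$ obtained by composing the constants from the two directions of the equivalence.

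The main obstacle, such as it is, lies in recognizing that one must bypass any direct MGF manipulation: without independence of $Z$ and $X$, the identity $\E[\exp(\lambda ZX)] = \E[\E[\exp(\lambda ZX)\mid Z]]$ does not reduce to a clean one-dimensional sub-Gaussian bound, because the conditional distribution of $X$ given $Z$ need not inherit the sub-Gaussian parameter $\sigma$. Once the problem is reframed through the moment characterization, the argument reduces to a pointwise majorization combined with two standard facts. The hypothesis $\E[ZX] = 0$ enters only in the final step, which is exactly the point at which translating a moment bound back into a centered MGF bound requires mean zero (otherwise one only gets a bound on $\E[\exp(\lambda(ZX - \E[ZX]))]$, which is not what the definition in the paper asks for).
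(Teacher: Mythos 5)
Your proposal is correct and reconstructs precisely the argument the paper points to: the paper does not prove this fact directly but cites the moment/MGF equivalence calculations on pp.~46--47 of Wainwright (2019), and your chain---sub-Gaussianity of $X$ gives $\E[\abs{X}^p]^{1/p}\le c_1\sigma\sqrt p$, the pointwise bound $\abs{Z}\le M$ gives $\E[\abs{ZX}^p]^{1/p}\le M\,\E[\abs{X}^p]^{1/p}\le c_1 M\sigma\sqrt p$, and $\E[ZX]=0$ lets you run the converse direction to recover the MGF bound $\E[\exp(\lambda ZX)]\le\exp(\lambda^2 C^2M^2\sigma^2/2)$---is exactly what is implicit there. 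Your closing observations, that conditioning on $Z$ fails without an independence assumption and that $\E[ZX]=0$ is what allows the linear term of the exponential series to vanish, are also correct and correctly placed.
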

All of the above facts are standard, and those for which no reference has been
provided above can be found, e.g., in \cite[Chapter
2]{Wainwright_2019}.  In particular, a proof of the previous fact is implicit in
the calculations in \cite[pp.~46--47]{Wainwright_2019}.

The following important notion (isolated in this form by Bubeck and
Sellke~\cite{BSisoperimetry}) will play an important role.
\begin{definition}[\textbf{$c$-isoperimetry}]\label{def:isoperimetry}
  Given $c > 0$, a distribution $\D$ on $\R^d$ is said to satisfy
  \emph{$c$-isoperimetry} if for very $L$-Lipschitz function
  $f: \R^d \rightarrow \R$, the random variable $f(X)$, when $X$ is sampled
  according to $\D$, is sub-Gaussian with parameter $L\sqrt{c/d}$.
\end{definition}
The important factor to note here is the dimension-dependent factor of
$1/\sqrt{d}$ in the sub-Gaussian parameter.  Roughly speaking, this says that
the distribution $\D$ exhibits a concentration of measure phenomenon similar to
the \emph{normalized} $d$-dimensional standard Gaussian distribution (i.e., with
mean $0$ and co-variance matrix $\frac{1}{d}\mathbf{I}$).  For further
discussion on this assumption, see \cite{BSisoperimetry}. Our results extend to
mixtures of isoperimetric distributions as in \cite{BSisoperimetry}.
\subsection{Overfitting}
We are now ready to define our notion of overfitting to the training set.  Our
starting point is the optimality of the conditional expectation as an estimator
(\cref{thm-best-bregman}).  In particular, from \cref{exsec} in
\cref{thm-best-bregman}, we can conclude that $\E[D_{\phi}(Y,f(X))]$ is at least
$\sigma^{2}_{\phi} \defeq E[D_{\phi}(Y,\E[Y|X])]$

We thus say that $f$ $\epsilon$-overfits training data
$\{x_i,y_i\}^n_{i=1}$ if
 \begin{equation}
   \label{eq:overfitting}
   \frac{1}{n}\sum^n_{i=1} D_{\phi}(y_{i},f(x_{i}))
   < \sigma^{2}_{\phi} -
   \epsilon,
 \end{equation}
 i.e., if the empirical Bregman divergence loss is lower (by at least
 $\epsilon$) than the minimum, over all functions
 $f: \R^d \rightarrow \mathbb{R}^{K}$, of the expected Bregman divergence loss where $x_i\in \R^{d}$ and
 $y_i\in \Omega$. This notion of overfitting for Bregman divergence loss
 generalizes that for the square loss used in \cite{BSisoperimetry}; in
 particular, when $D_{\phi}$ is the square loss, it reduces to their
 notion.
 \section{Proof of the main theorem}
 \label{sec:proof-main-theorem}
We are now ready to state the main result of our paper.  We assume that the
model for generating the covariates $(X, Y)$ is described by the following
graphical model
\begin{equation}
  \label{eq:mixture}
  \begin{tikzpicture}[->,>=stealth,shorten >=1pt,auto,node distance=2.5cm,
    thick,main node/.style={circle,draw,font=\sffamily\Large\bfseries}]

\node[main node] (1) {$G$}; \node[main node] (2) [right of=1]
    {$X$}; \node[main node] (3) [right of=2] {$Y$};

\path[every node/.style={font=\sffamily\small}]
    (1) edge (2)
    (2) edge (3);
  \end{tikzpicture},
\end{equation}
where $G$ denotes the label of the mixture component.  In particular, we assume
that the label $Y$ is independent of the index of the mixture component,
conditioned on the covariate $X$.  This is the same model as the one used by
Bubeck and Sellke~\cite[Theorem 3, points 2 and 3]{BSisoperimetry}. (See also
\cref{sec:extension-mixtures}.)
\begin{theorem}\label{maintheorem}
  Let $\Omega$ be a compact convex subset of $\R^K$ for some
  $K > 0$, with $\ell_{\infty}$-diameter at most $\omegadiam$.  Let
  $\phi: \Omega \rightarrow \R$ be a continuously differentiable strictly convex
  function. Let $D_{\phi}$ denote the corresponding Bregman divergence loss.
  For $\Delta \subseteq \R^{d}$, let $\D$ be a probability distribution on
  $\Delta \times \Omega$ such that $\D$ obeys the graphical model in
  \cref{eq:mixture} and such that the marginal $\D_{X}$ of $\D$ is a mixture of
  $\compnum$ distributions $\inp{\D_i}_{i=1}^{\compnum}$ each of which is
  $c$-isoperimetric for some $c > 0$.  Assume that $\phi$ satisfies the
  following regularity condition: there exists a subset $A\subseteq \Omega$, such that a version of
    $\E_{(X, Y)\sim \D}[Y|X]$ takes values only in $A$, and such that \(A\) satisfies
    the following:\footnote{Since $\Omega$ is a compact convex subset of
      $\R^{K}$, it can be assumed without loss of generality that a version of
      $\E[Y|X]$ taking values only in $\Omega$ exists.}
    \begin{align}
      a_0 & \defeq \sup_{a \in A} \norm{a}
      \leq m_{0} \defeq \max_{\omega \in \Omega}\norm{\omega} < \infty, \text{ and } \label{eq:A-bounded}\\
      m_{2} & \defeq \sup_{a\in A} \abs{\phi(a)}
      \leq m_1 \defeq \max_{\omega \in \Omega} \abs{\phi(\omega)} < \infty \text{, and }\label{eq:phi-bounded}\\
      m_{3} &\defeq \sup_{a\in A} \norm{\nabla \phi(a)} <\infty \label{eq:phi-bounded-derivative}.
    \end{align}
Let $\mathcal{F}$ be a $(p, J)$-realistic function class with parameter domain
  $B_p$, input domain $\Delta \subseteq \R^d$, and co-domain $\Omega$.  Assume
  that
  \begin{enumerate}
  \item $\phi$ is $\phiL$-Lipschitz on the range
    $R \defeq \inb{f(x) | x \in \Delta \text{ and } f \in \F}$ of $\F$, and
  \item For each $1 \leq \ell \leq K$, the derivative $(\nabla \phi)_{\ell}$ of the
    $\ell$\textsuperscript{th} coordinate of $\phi$ is \phidL{}-Lipschitz on the range
    \(R\).  \label{item:phi-grad-lip}
  \end{enumerate}
  Further define
  \begin{equation}
    \label{eq:global-bounds}
    W \defeq \operatorname{diam}(B_{p}) \text{ and }
    \gamma \defeq \sup_{\substack{f \in \F \\ x \in \Delta}} \norm[2]{\nabla\phi(f(x))}.
  \end{equation}
  Given $\epsilon, \delta \in (0, 1)$, fix a positive integer $n$ satisfying
\begin{equation*}
    n \geq \max\inb{\frac{300\log(\frac{10K}{\delta})}{\epsilon^2}
      \inp{
        m_1 + m_2 + 2\max\inb{3 \gamma, m_3}(m_0 + a_0)}^2,
      \frac{2048 K^2\gamma^{2}\compnum\omegadiam^2\log(\frac{10K\compnum}{\delta})}{\epsilon^2}}.
  \end{equation*}
Let $(X_i, Y_i)_{i=1}^n$ be $n$ i.i.d samples from $\D$.  Then, with
  probability at least $1 - \delta$ over the random choice of these samples, the
  Lipschitz constant $L$ of any function $f \in \F$ that $\epsilon$-overfits
  (see \cref{eq:overfitting}) these samples satisfies
\begin{equation}
  \label{eq:main-bound}
  L\geq \frac{\epsilon}{32CK\omegadiam\phidL\sqrt{2c}}
  \sqrt{
    \frac{
      nd
    }{
      p\log (1+8JW(\omegadiam \phidL {K} + \phiL +\gamma)/\epsilon)
      + \log(5K/\delta)
    }
  }.
\end{equation}
\end{theorem}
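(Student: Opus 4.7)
The plan is to carry out the bias--variance style decomposition advertised in the introduction via the Bregman triangle equality \cref{eq:br-triangle}, and then use $c$-isoperimetry to control a mean-zero cross term whose tail shrinks with the Lipschitz constant of $f$. For any sample $(X_i, Y_i)$ and any $f \in \F$, the triangle equality gives
\begin{equation*}
  D_\phi(Y_i, f(X_i)) = D_\phi(Y_i, \mu(X_i)) + D_\phi(\mu(X_i), f(X_i)) - \langle Y_i - \mu(X_i), \nabla\phi(f(X_i)) - \nabla\phi(\mu(X_i))\rangle,
\end{equation*}
where $\mu(X) = \E[Y\mid X]$ is the Bregman-optimal predictor (\cref{thm-best-bregman}). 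Averaging over $i$, the first term concentrates around $\sigma^{2}_{\phi}$ by Hoeffding (its summands are bounded in terms of $m_0, m_1, m_2, m_3, \omegadiam$), and the second is non-negative; hence if $f$ $\epsilon$-overfits, the empirical cross term
\begin{equation*}
  \Gamma(f) \defeq \frac{1}{n}\sum_{i=1}^{n}\langle Y_i - \mu(X_i), \nabla\phi(f(X_i)) - \nabla\phi(\mu(X_i))\rangle
\end{equation*}
must exceed $\epsilon/2$ with probability at least $1-\delta/5$, so the core task reduces to bounding $\Pr[\Gamma(f) > \epsilon/2]$ uniformly over $L$-Lipschitz $f \in \F$.

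For a \emph{fixed} $L$-Lipschitz $f$, the key manipulation is to isolate the Lipschitz-sensitive piece of $\Gamma$ by centering $V(X) \defeq \nabla\phi(f(X)) - \E[\nabla\phi(f(X))]$. Writing $\Gamma(f) = A + B + C$ with $A \defeq \tfrac{1}{n}\sum_i \langle Y_i - \mu(X_i), V(X_i)\rangle$, the remainder $B$ is the inner product of the mean-zero bounded vector average $\tfrac{1}{n}\sum_i (Y_i - \mu(X_i))$ with the constant $\E[\nabla\phi(f)] - \E[\nabla\phi(\mu)]$, and $C$ is the $f$-free term $-\tfrac{1}{n}\sum_i\langle Y_i-\mu(X_i), \nabla\phi(\mu(X_i))-\E[\nabla\phi(\mu)]\rangle$; both are controlled by Hoeffding and the vector Hoeffding inequality \cref{eq-vector}. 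The summands of $A$ are i.i.d.\ and mean-zero because $\E[Y_i - \mu(X_i)\mid X_i] = 0$. Crucially, since $f$ is $L$-Lipschitz and each coordinate $(\nabla\phi)_\ell$ is $\phidL$-Lipschitz on the range of $\F$, each coordinate of $V$ is a mean-zero $L\phidL$-Lipschitz function of $X$; under $c$-isoperimetry (\cref{def:isoperimetry}) its variance is at most $L^2\phidL^2 c/d$, giving $\E[\|V(X)\|_2^2] \le KL^2\phidL^2 c/d$. Combining with the $\ell_\infty$-diameter bound on $Y-\mu$ via \cref{fct-multiply-sub-gauss} contributes an extra $\sqrt{K}\omegadiam$ factor, so each summand of $A$ has variance $O(K^2\omegadiam^2 L^2\phidL^2 c/d)$ and is almost-surely bounded by $O(\sqrt{K}\omegadiam\gamma)$. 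Bernstein's inequality then yields
\begin{equation*}
  \Pr[A > \epsilon/8] \le \exp\!\left(-\Omega\!\left(\frac{nd\epsilon^2}{K^2\omegadiam^2 L^2\phidL^2 c}\right)\right),
\end{equation*}
provided $n$ is large enough that the variance term dominates the boundedness correction (precisely the first clause of the $n$-hypothesis of the theorem).

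To quantify over all $L$-Lipschitz $f \in \F$, the plan is to use a standard $\ell_2$-net $N_\eta$ of $B_p$ of size at most $(3W/\eta)^p$. The $J$-Lipschitz parameterization guarantees $\|f - \tau(w^*)\|_\infty \le J\eta$ for the nearest $w^* \in N_\eta$, and a direct use of the triangle equality together with the bounds on $\phi$, $\nabla\phi$, and $\omegadiam$ shows that both the empirical loss and $\Gamma$ change by at most $O((\phiL + \gamma + K\omegadiam\phidL)J\eta)$ between $f$ and $\tau(w^*)$. Picking $\eta$ proportional to $\epsilon/[J(\phiL + \gamma + K\omegadiam\phidL)]$ transfers ``$\Gamma(f) > \epsilon/2$'' to ``$\Gamma(\tau(w^*)) > \epsilon/4$''; the same $\eta$, under the target lower bound on $L$, is also small enough for the isoperimetric variance estimate for $V$ to transfer from $f$ to the nearby $\tau(w^*)$ up to constants, keeping the Bernstein bound valid at $\tau(w^*)$. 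A union bound of the Bernstein tail over $N_\eta$ then produces the $p\log(1+8JW(K\omegadiam\phidL + \phiL+\gamma)/\epsilon)$ factor. For the mixture $\D_X = \sum_{j=1}^{\compnum}p_j\D_j$, conditioning on the component labels splits the samples into $\compnum$ conditionally independent groups drawn from $c$-isoperimetric marginals (the within-component conditional mean remains $\mu$ by the graphical model in \cref{eq:mixture}); the single-component analysis applies to each group, and union-bounding over the $\compnum$ components and the $K$ coordinates of the vector Hoeffding step explains the extra $\log(5K\compnum/\delta)$ overhead and the second lower bound on $n$. The subtlest point, and the main bookkeeping hurdle, is the variance transfer just mentioned: since $\tau(w^*)$ need not be $L$-Lipschitz even when $f$ is, one has to verify that $\sum_i \|V^{\tau(w^*)}(X_i) - \bar V^{\tau(w^*)}\|^2$ remains within a constant factor of $\sum_i \|V(X_i)\|^2$, which forces a careful simultaneous choice of $\eta$ meeting both the overfit-transfer and the variance-transfer requirements.
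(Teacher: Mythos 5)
Your decomposition via the Bregman triangle equality and your reduction of the problem to controlling the mean-zero cross term $\Gamma(f)$ are exactly right, and so is the identification that $c$-isoperimetry should control the Lipschitz-sensitive piece $A$. The split of $\Gamma$ into $A$, $B$, $C$ is a cosmetic regrouping of the paper's $\Gamma_1, \Gamma_2, \Gamma_3$ (your $B+C$ equals the paper's $\Gamma_1+\Gamma_2$ after re-centering). However, two steps in your plan do not go through as stated.

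First, you only extract a variance bound from isoperimetry and then invoke Bernstein. This loses information: isoperimetry gives that each coordinate $V_\ell(X) = (\nabla\phi \circ f)_\ell(X) - \E[(\nabla\phi\circ f)_\ell(X)]$ is \emph{sub-Gaussian} with parameter $\phidL L\sqrt{c/d}$, and combining this with the a.s. bound $\abs{Y_\ell - \mu_\ell} \leq \omegadiam$ and the fact that $\E[(Y_\ell-\mu_\ell)V_\ell]=0$ (via \cref{fct-multiply-sub-gauss}) yields that each product $(Y_\ell - \mu_\ell)V_\ell$ is itself sub-Gaussian with parameter $C\omegadiam\phidL L\sqrt{c/d}$, so Hoeffding applies coordinate-wise with no regime-splitting. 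Your Bernstein route instead requires the variance term to dominate, i.e. $K^2\omegadiam^2\phidL^2L^2c/d \gtrsim \sqrt{K}\omegadiam\gamma\,\epsilon$; at the $L$ you ultimately want to contradict, this condition can fail (it effectively imposes an extra lower bound on $n$ involving $p$, which is not assumed in the theorem), and in the boundedness-dominated regime the resulting exponent $\exp(-\Omega(n\epsilon/(\sqrt{K}\omegadiam\gamma)))$ does not beat the net size $\exp(\Theta(p))$. Switching to the sub-Gaussian-product route (the paper's \cref{lem-Gamma-3}) removes this obstruction cleanly.

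Second, and more importantly, the net argument as you set it up has a genuine gap that you yourself flag: the nearest net point $\tau(w^*)$ is close to $f$ in sup norm but need not be $L$-Lipschitz (or even approximately so), so the isoperimetric sub-Gaussian/variance estimate does \emph{not} transfer to $V^{\tau(w^*)}$, and the ``variance transfer'' you propose cannot be carried out in general: the requirement $J\eta \lesssim L\sqrt{c/d}$ it imposes is incompatible with the $\eta \sim \epsilon/[J(\phiL + \gamma + K\omegadiam\phidL)]$ you need for the overfit transfer, once $p\log(\cdot)$ is large relative to $n$ (precisely the overparameterized regime of interest). The correct resolution, used in the paper, is a standard but different trick: start from a $\nu/2$-net of all of $\F$, and replace each net point by a nearby element of $\F_L$ whenever one exists, producing a $\nu$-net of $\F_L$ \emph{whose elements all lie in $\F_L$}. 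The union bound is then taken over genuinely $L$-Lipschitz functions, so the isoperimetric concentration estimate applies to each net element directly and no ``variance transfer'' is needed. Finally, for the mixture case, conditioning on the component labels is the right idea, but it is not enough to say ``the single-component analysis applies'': the centering of $V$ must change, since isoperimetry of $\D_{G_i}$ concentrates $\nabla\phi(f(X_i))_\ell$ around $\E[\cdot\mid G_i]$, not around the unconditional mean. You need a further split $V = \hat{V} + \tilde{V}$ (within-component and between-component parts), with $\hat{V}$ handled by the conditional isoperimetry argument and $\tilde{V}$ by a separate bound on the bounded, $G$-measurable term via Cauchy--Schwarz over component sizes (the paper's \cref{lem-vhat-conc} and \cref{lem-vt-conc}); this is where the extra $\compnum$ factor in the sample-size requirement and the $\log(\compnum)$ overhead come from.
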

Note that when $\phi$ is a $C^{2}$ function then we can take
$\phidL = \max_{x \in \Omega} \norm[{2 \rightarrow 2}]{\nabla^{2} \phi(x)}$.
Almost all losses used in machine learning are $C^{2}$
losses, for example, square loss, cross-entropy defined on a bounded set away
from the $\vec{0}$ vector and coordinate axes.
\paragraph*{Remark}  \cref{maintheorem} tells us that when a Bregman divergence loss (such as cross-entropy loss, logistic loss, etc.) is used for training, overparameterization becomes essential for robust interpolation (i.e., achieving a low Lipschitz constant). Note that all the regularity assumptions of \cref{maintheorem} are satisfied by the cross-entropy loss, provided that the probability of each label given the covariate is bounded away from zero. For $K$-class classification, the assumption $\Pr[Y_{i} = 1 \mid X = x] \geq \alpha>0$ for all $i$ and all $x$ implies the existence of  $A \subseteq \tilde{\Delta}_{K} \cap [\alpha, 1 - \alpha)^K$, where $\Omega = \tilde{\Delta}_{K}$ denotes the $K$-dimensional probability simplex. For further details, please refer to \cref{cor-classification}. 

We now proceed to describe the steps of the proof of \cref{maintheorem}, and
begin by setting up some notational conventions. We denote the sample
$(X_i, Y_i)$ by $S_i$ for $1 \leq i \leq n$.  Given a function $f \in \F$, we
let $Z_i$ denote the random variable $D_{\phi}(Y_i, f(X_i))$ (the function $f$
would be understood from the context).  When discussing a single sample, we will
often drop the index and denote these as $S = (X, Y)$, and
$Z = D_{\phi}(Y, f(X))$.  Similarly, we will use the corresponding small case
letters $s_i, s, x_i, x$, etc{}. to denote specific realizations of the
corresponding random variables $S_i, S, X_i, X$, etc.  The starting point of the
proof is the following simple but important decomposition.
\begin{lemma}[Decomposition]\label{lem:decompose}
  Fix $f \in \F$, and with the above notation, define (in accordance with
  \cref{eq:overfitting})
  \begin{equation}
    \label{eq:6}
    \sigma_{\phi}^2 \defeq \mathop{\E}\limits_{(X, Y) \sim \D}[D_{\phi}(Y,
    \E[Y|X])].
  \end{equation}
  We then have
  \begin{equation}
    \label{eq:z-decompose}
    Z - \sigma_{\phi}^2 = \Phi_1 + \Phi_2 + \sum_{i=1}^3\Gamma_i,
  \end{equation}
  where $\Phi_1, \Phi_2$, and $\inp{\Gamma_i}_{i=1}^3$ are random variables
  defined as
  \begin{gather}
    \Phi_{1}
     \defeq D_{\phi}(\E[Y|X], f(X)),\;
    \Phi_{2}
     \defeq D_{\phi}(Y,\E[Y|X]) - \sigma^{2}_{\phi},\\
\Gamma_{1}
    \defeq \ina{Y - \E[Y|X], \nabla\phi(\E(Y|X))}, \;
    \Gamma_{2} = \Gamma_2(f)
    \defeq -\ina{Y - \E[Y|X], \E\left[\nabla \phi (f(X))\right]}, \text{ and }\\
    \Gamma_{3} = \Gamma_3(f)
    \defeq -\ina{Y - \E[Y|X], \nabla \phi (f(X)) - \E\left[\nabla \phi (f(X))\right]}.
  \end{gather}
  Further, $\Phi_2$ and the $\Gamma_i$ ($1 \leq i \leq 3$) have mean $0$, and
  $\Phi_1$ is non-negative.
\end{lemma}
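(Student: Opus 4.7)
The plan is to derive \cref{eq:z-decompose} as a direct consequence of the triangle equality \cref{eq:br-triangle}, applied with a specific choice of intermediate point, and then verify the mean-zero and non-negativity claims by elementary arguments.

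First, I would apply the Bregman triangle equality (property~\ref{item:d-triangle}) with $x = Y$, $z = \E[Y|X]$, and $y = f(X)$, which gives
\begin{equation*}
  D_{\phi}(Y, f(X)) = D_{\phi}(Y, \E[Y|X]) + D_{\phi}(\E[Y|X], f(X))
  - \ina{Y - \E[Y|X], \nabla \phi(f(X)) - \nabla \phi(\E[Y|X])}.
\end{equation*}
Subtracting $\sigma_{\phi}^2$ from both sides and identifying terms, the first two summands on the right-hand side become $\Phi_2 + \sigma_{\phi}^2$ and $\Phi_1$, respectively, while the inner product splits into $\Gamma_1$ on one hand and, on the other, the quantity $-\ina{Y - \E[Y|X], \nabla \phi(f(X))}$. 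Adding and subtracting $\E[\nabla \phi(f(X))]$ inside the second inner product splits that term exactly into $\Gamma_2 + \Gamma_3$, yielding \cref{eq:z-decompose}.

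For the mean-zero claims, the key observation is that $Y - \E[Y|X]$ has conditional mean zero given $X$. Thus, by the tower property, $\E[\Gamma_1] = \E[\ina{\E[Y - \E[Y|X] \mid X], \nabla\phi(\E[Y|X])}] = 0$, and similarly $\E[\Gamma_3] = 0$, since $\nabla\phi(f(X)) - \E[\nabla\phi(f(X))]$ is $X$-measurable. For $\Gamma_2$, the vector $\E[\nabla\phi(f(X))]$ is deterministic, so the expectation pulls out and gives $-\ina{\E[Y - \E[Y|X]], \E[\nabla\phi(f(X))]} = 0$. Finally, $\E[\Phi_2] = 0$ holds directly by the definition of $\sigma_{\phi}^2$ in \cref{eq:6}, and the non-negativity of $\Phi_1$ is simply the non-negativity of the Bregman divergence (property~2 of the preliminaries).

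There is no real obstacle here: the lemma is essentially a bookkeeping identity. The only point that requires a moment of thought is the choice of the middle point in the triangle equality, namely $\E[Y|X]$; this is natural because \cref{thm-best-bregman} identifies it as the Bregman-optimal $X$-measurable predictor, which is precisely what makes $\Phi_2$ and $\Gamma_1$ mean-zero and isolates the "bias" contribution $\Phi_1$ that captures how far the candidate $f(X)$ is from this optimal predictor.
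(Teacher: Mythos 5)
Your proposal is correct and follows essentially the same route as the paper: apply the Bregman triangle equality with $\E[Y|X]$ as the intermediate point, split the resulting inner product into the three $\Gamma_i$ by adding and subtracting $\E[\nabla\phi(f(X))]$, and establish the mean-zero claims via the tower property and the non-negativity of $\Phi_1$ from that of the Bregman divergence. Your write-up is a bit more explicit about the algebra than the paper, which simply invokes the triangle decomposition and spells out only the $\Gamma_3$ computation, but the argument is identical.
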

\paragraph*{Remark} As discussed earlier, our proof technique relies on a bias-variance type decomposition of Bregman divergence losses. In the decomposition \cref{lem:decompose}, the terms $\Phi_1$ and $\Phi_2$ correspond to the bias and variance components, respectively, while $\Gamma_1$, $\Gamma_2$, and $\Gamma_3$ are mean-zero terms. Specifically, the term $\Gamma_3$ involves the function $f$ evaluated at random points. 
\begin{proof}
  The decomposition follows by applying the triangle decomposition for the
  Bregman divergence (\cref{eq:br-triangle}) to $Z = D_{\phi}(Y, f(X))$ with the
  ``third point'' chosen as $\E[Y|X]$.  The non-negativity of $\Phi_1$ follows from
  the non-negativity of Bregman divergences.  $\E[\Phi_2]$ is zero by the
  definition of $\sigma_{\phi}^2$ (\cref{eq:overfitting,eq:6}).  We also have,
  for example,
  \begin{align}
    \E[\Gamma_{3}]
    &= \E[\ina{Y - E[Y|X], \nabla \phi (f(X)) - \E[\nabla \phi (f(X))]}] \\
    &= \E[\E[\ina{Y - \E[Y|X], \nabla \phi (f(X)) - \E[\nabla \phi (f(X))]}|X] \\
    &\overset{(\star)}{=} \E[\ina{\E[Y - \E[Y|X]|X], \nabla \phi (f(X)) - \E[\nabla \phi (f(X))]}] = 0,\label{eq:2}
  \end{align}
  where the equality marked $(\star)$ follows because
  $\nabla \phi (f(X)) - \E[\nabla \phi (f(X))]$ is measurable with respect to
  the $\sigma$-field generated by $X$.  The computations for $\Gamma_1$ and
  $\Gamma_2$ are similar.
\end{proof}

We now prove appropriate concentration results for sample estimates of
$\Phi_2$ and the $\Gamma_i$, beginning with $\Phi_2$.
\begin{observation}\label{obv-Phi-2}
  Let $\Phi_2^{(1)}, \Phi_2^{(2)}, \dots, \Phi_2^{(n)}$ be $n$ i.i.d.{} samples of $\Phi_2$
  generated using $n$ i.i.d.{} samples $(X_i, Y_i)_{i=1}^n$ from $\D$.  Let
  $M_0 \defeq \sup_{(x, y) \in \Delta \times \Omega} D_{\phi}(y, \E_{(X, Y) \sim
    \D}[Y|X=x]).$ Then for any $\epsilon > 0$.
  \begin{equation}
    \Pr\inp{\frac{1}{n}\sum^n_{i=1}\Phi_{2}^{(i)} \leq -\epsilon} \leq  e^{-2n\epsilon^{2}/M_0^{2}}.
  \end{equation}
  Further, in the notation of
  \cref{eq:A-bounded,eq:phi-bounded,eq:phi-bounded-derivative} in
  \cref{maintheorem}, we can take $M_0 = m_1 + m_2 + m_3(m_0 + a_0)$. \end{observation}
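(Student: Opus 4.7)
The plan is to observe that $\Phi_2 = D_\phi(Y, \E[Y|X]) - \sigma_\phi^2$ is a bounded mean-zero random variable, and then invoke the one-sided Hoeffding inequality for bounded random variables (\cref{eq:hoeffding} in the preliminaries).

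First, I would bound the random variable $D_\phi(Y, \E[Y|X])$ above. Since the Bregman divergence is non-negative, we have $D_\phi(Y, \E[Y|X]) \ge 0$ almost surely. For the upper bound, I would use the definition of the Bregman divergence together with the triangle inequality and the hypotheses \cref{eq:A-bounded,eq:phi-bounded,eq:phi-bounded-derivative} of \cref{maintheorem}. For $(x,y) \in \Delta \times \Omega$ and any version $a = \E_{\D}[Y\mid X = x] \in A$,
\begin{equation*}
  D_\phi(y, a) = \phi(y) - \phi(a) - \ina{\nabla \phi(a), y - a}
  \le \abs{\phi(y)} + \abs{\phi(a)} + \norm{\nabla \phi(a)} \inp{\norm{y} + \norm{a}}
  \le m_1 + m_2 + m_3(m_0 + a_0) \eqdef M_0,
\end{equation*}
using $\norm{y} \le m_0$ and $\abs{\phi(y)} \le m_1$ for $y \in \Omega$, together with the hypotheses on $A$. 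Hence $D_\phi(Y, \E[Y|X]) \in [0, M_0]$ almost surely, and therefore $\Phi_2 = D_\phi(Y, \E[Y|X]) - \sigma_\phi^2$ takes values in the interval $[-\sigma_\phi^2, M_0 - \sigma_\phi^2]$, which has length at most $M_0$.

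Next, since $\Phi_2^{(1)}, \dots, \Phi_2^{(n)}$ are i.i.d.{} copies of the mean-zero random variable $\Phi_2$ bounded in an interval of length $M_0$, I would apply the one-sided Hoeffding bound \cref{eq:hoeffding} with $b - a = M_0$ and $t = \epsilon$ to conclude
\begin{equation*}
  \Pr\inp{\frac{1}{n}\sum_{i=1}^n \Phi_2^{(i)} \le -\epsilon}
  \le \exp\inp{-\frac{2n\epsilon^2}{M_0^2}},
\end{equation*}
which is the required inequality.

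There is no real obstacle in this argument: the crux is merely the uniform upper bound on $D_\phi(Y,\E[Y|X])$, and the hypotheses of \cref{maintheorem} on $\phi$ restricted to $A$ and to $\Omega$ were designed precisely to furnish such a bound. The only minor subtlety is choosing which version of $\E[Y|X]$ to work with; the statement following \cref{eq:phi-bounded-derivative} guarantees that we may pick one whose values lie entirely in $A$, so the bound above holds almost surely.
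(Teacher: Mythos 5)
Your proof is correct and follows essentially the same route as the paper: apply the one-sided Hoeffding inequality to the i.i.d.\ random variables $D_\phi(Y_i, \E[Y_i|X_i])$, which lie in $[0, M_0]$, and derive $M_0 = m_1 + m_2 + m_3(m_0 + a_0)$ by expanding the Bregman divergence and bounding each term via the triangle and Cauchy--Schwarz inequalities together with \cref{eq:A-bounded,eq:phi-bounded,eq:phi-bounded-derivative}. The only cosmetic difference is that you explicitly phrase the bound in terms of the shifted variable $\Phi_2$ living in an interval of length $M_0$, whereas the paper applies Hoeffding directly to the unshifted variables; both are equivalent.
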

\begin{proof}
  The concentration claim follows directly from the Hoeffding inequality applied
  to the $n$ i.i.d.{} random variables $D_{\phi}(Y_i, \E[Y_{i}|X_{i}])$,
  $1 \leq i \leq n$, which are all constrained to lie in the interval
  $[0, M_0]$. The bound on $M_0$ follows as given below. For $p \in \Omega$ and
  $q \in A \subseteq \Omega$ (with $\Omega$ and $A$ as defined in
  \cref{maintheorem}),
\begin{align}
  D_{\phi}(p, q)
  &= |\phi(p)-\phi(q)-\langle  \nabla \phi(q)\; , \; p-q \rangle|\\
  &\leq |\phi(p)|+|\phi(q)|+ \norm{\nabla\phi(q)}
    \cdot \inp{\norm{p} + \norm{q}}\\
  &\leq m_{1}+m_{2}+m_{3}\cdot\inp{m_{0}+a_{0}}. \qedhere
\end{align}
\end{proof}
An analogous application of the Hoeffding bound gives the following.
\begin{observation}\label{obv-Gamma-1}
  Let $\Gamma_1^{(1)}, \Gamma_1^{(2)}, \dots, \Gamma_1^{(n)}$ be $n$
  i.i.d.{} samples of $\Gamma_1$ generated using $n$ i.i.d.{} samples
  $(X_i, Y_i)_{i=1}^n$ from $\D$.  Then for any $\epsilon > 0$.
  \begin{equation}
    \Pr\inp{\frac{1}{n}\sum^n_{j=1}\Gamma_1^{({j})} \leq -\epsilon} \leq  e^{-2n\epsilon^{2}/M_1^{2}},
  \end{equation}
  where, in the notation of
  \cref{eq:A-bounded,eq:phi-bounded,eq:phi-bounded-derivative}
in \cref{maintheorem}, we can take $M_1 = 2 m_3\cdot ( m_0 + a_0)$. \end{observation}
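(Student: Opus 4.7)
The plan is to mimic the proof of the preceding \cref{obv-Phi-2} almost verbatim: since $\Gamma_1$ has already been shown to be mean zero in \cref{lem:decompose}, and the samples $\Gamma_1^{(j)}$ are i.i.d.{} by construction, it suffices to bound $\abs{\Gamma_1}$ almost surely by $M_1/2 = m_3(m_0 + a_0)$ and then invoke Hoeffding's inequality in the form of \cref{eq:hoeffding}.

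First I would bound $\abs{\Gamma_{1}}$ pointwise via Cauchy--Schwarz:
\begin{equation*}
  \abs{\Gamma_{1}} = \abs{\ina{Y - \E[Y|X], \nabla\phi(\E[Y|X])}}
  \leq \norm{Y - \E[Y|X]} \cdot \norm{\nabla\phi(\E[Y|X])}.
\end{equation*}
For the first factor, the triangle inequality combined with \cref{eq:A-bounded} of \cref{maintheorem} yields $\norm{Y - \E[Y|X]} \leq \norm{Y} + \norm{\E[Y|X]} \leq m_0 + a_0$, since $Y \in \Omega$ and the version of $\E[Y|X]$ under consideration takes values in $A$. For the second factor, the assumption \cref{eq:phi-bounded-derivative} together with the same fact that $\E[Y|X] \in A$ gives $\norm{\nabla\phi(\E[Y|X])} \leq m_3$ almost surely. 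Combining these two estimates shows that $\Gamma_{1}$ is supported in the interval $[-m_{3}(m_{0} + a_{0}),\, m_{3}(m_{0} + a_{0})]$, which has length $M_1 = 2m_3(m_0 + a_0)$.

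Finally, since $\E[\Gamma_1] = 0$ by \cref{lem:decompose} and the $\Gamma_1^{(j)}$ are i.i.d., applying Hoeffding's inequality \cref{eq:hoeffding} to these bounded i.i.d.\ random variables immediately gives
\begin{equation*}
  \Pr\!\left(\frac{1}{n}\sum^n_{j=1} \Gamma_1^{(j)} \leq -\epsilon\right)
  \leq \exp\!\left(-\frac{2n\epsilon^{2}}{M_1^{2}}\right),
\end{equation*}
as required. There is no real obstacle here; the only point that needs care is making sure the conditional expectation $\E[Y|X]$ is the particular version that takes values in $A$ (so that the bounds \cref{eq:A-bounded,eq:phi-bounded-derivative} apply), which is guaranteed by the hypothesis of \cref{maintheorem}.
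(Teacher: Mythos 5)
Your proof is correct and is essentially the argument the paper has in mind: the paper states \cref{obv-Gamma-1} without a written proof, remarking only that it follows by ``an analogous application of the Hoeffding bound'' to \cref{obv-Phi-2}, and your Cauchy--Schwarz bound $\abs{\Gamma_1}\leq m_3(m_0+a_0)$ combined with mean-zero-ness and \cref{eq:hoeffding} is exactly that analogy made explicit.
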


The proof for $\Gamma_2$ is also similar, since it does not depend upon
evaluations of $f$ on a random point.
\begin{observation}\label{obv-Gamma-2}
  For $f \in \F$, let $\Gamma_2^{(1)}(f), \Gamma_2^{(2)}(f), \dots, \Gamma_2^{(n)}(f)$ be
  $n$ i.i.d.{} samples of $\Gamma_2$ generated using $n$ i.i.d.{} samples
  $(X_i, Y_i)_{i=1}^n$ from $\D$.  Then for any $\epsilon > 0$.
  \begin{equation}
    \Pr\inp{\exists f \in \F \text{ s.t. } \frac{1}{n}\sum^n_{j=1}\Gamma_2^{({j})}(f) \leq -\epsilon} \leq 2 e^{-2n\epsilon^{2}/M_2^{2}},
  \end{equation}
  where, in the notation of
  \cref{eq:A-bounded,eq:phi-bounded,eq:phi-bounded-derivative,eq:global-bounds}
in \cref{maintheorem}, we can take $M_2 = 6 \gamma \cdot (m_0 + a_0)$.
\end{observation}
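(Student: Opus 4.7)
The central observation is that the supremum over $f \in \F$ does not actually require a union bound, because the dependence on $f$ factors out through a deterministic (sample-independent) vector of bounded norm. Writing $v_f \defeq \E_{(X,Y)\sim \D}\insq{\nabla \phi(f(X))}$, this quantity is a fixed vector in $\R^K$, and from the definition \eqref{eq:global-bounds} of $\gamma$ together with Jensen's inequality we have $\norm{v_f} \leq \gamma$ uniformly in $f$. Setting $V_j \defeq Y_j - \E[Y_j \mid X_j]$, we can write
\begin{equation*}
  \frac{1}{n}\sum_{j=1}^n \Gamma_2^{(j)}(f) = -\ina{\frac{1}{n}\sum_{j=1}^n V_j,\; v_f},
\end{equation*}
and Cauchy--Schwarz yields
\begin{equation*}
  \sup_{f \in \F} \abs{\frac{1}{n}\sum_{j=1}^n \Gamma_2^{(j)}(f)} \leq \gamma \cdot \norm{\frac{1}{n}\sum_{j=1}^n V_j}.
\end{equation*}
Thus the claim reduces to a tail bound on the single vector-valued average $\frac{1}{n}\sum_j V_j$, and in particular is independent of the richness of $\F$.

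Next I would verify that $V_1, \dots, V_n$ satisfy the hypotheses of the vector bounded-differences inequality \eqref{eq-vector}: the $V_j$ are i.i.d.\ (since the $(X_j, Y_j)$ are); each is mean zero by the tower property; and using $Y_j \in \Omega$ and the bound \eqref{eq:A-bounded} on the range of the conditional expectation, we obtain $\norm{V_j} \leq \norm{Y_j} + \norm{\E[Y_j \mid X_j]} \leq m_0 + a_0$ almost surely. Applying \eqref{eq-vector} with $b = m_0 + a_0$ and threshold $t = \epsilon/\gamma$ gives
\begin{equation*}
  \Pr\insq{\norm{\frac{1}{n}\sum_{j=1}^n V_j} \geq \frac{\epsilon}{\gamma}}
  \leq 2\exp\inp{-\frac{n\epsilon^2}{16\gamma^2(m_0 + a_0)^2}}.
\end{equation*}

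Combining these two steps: the event that there exists $f \in \F$ with $\frac{1}{n}\sum_j \Gamma_2^{(j)}(f) \leq -\epsilon$ implies $\gamma \norm{\frac{1}{n}\sum_j V_j} \geq \epsilon$, so it is contained in the event above. To match the stated constant $M_2 = 6\gamma(m_0 + a_0)$, note that $2\epsilon^2/M_2^2 = \epsilon^2/(18\gamma^2(m_0 + a_0)^2) \leq \epsilon^2/(16\gamma^2(m_0 + a_0)^2)$, so the claimed bound $2\exp(-2n\epsilon^2/M_2^2)$ follows directly.

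The only subtle step is the factorization at the start, which explains why no covering/union bound over $\F$ is needed here (in contrast to the treatment of $\Gamma_3$, which will genuinely involve $f$ evaluated at the \emph{random} samples and therefore require Lipschitz-parameterization arguments). Given this observation, the rest is a routine application of the vector concentration inequality \eqref{eq-vector} already recorded in \cref{sec:prelim}.
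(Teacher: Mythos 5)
Your proposal is correct and follows essentially the same approach as the paper: both factor out the deterministic vector $\E[\nabla\phi(f(X))]$ (bounded in norm by $\gamma$), reduce to a tail bound on $\norm{\frac{1}{n}\sum_j (Y_j - \E[Y_j\mid X_j])}$ via Cauchy--Schwarz, and apply the vector concentration inequality \eqref{eq-vector} with $b = m_0 + a_0$. Your explicit remark on why no union bound over $\F$ is needed, and your arithmetic check that $16 \leq 18 = M_2^2/(2\gamma^2(m_0+a_0)^2)$, merely spell out what the paper leaves implicit.
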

\begin{proof}
  Consider the mean-zero i.i.d random vectors $V_i \defeq Y_i - \E[Y_i | X_i]$
  where $1 \leq i \leq n$.  From \cref{eq:A-bounded}, we also have that
  $\norm{V_i} \leq m_0 + a_0$ for each $i$. Let $\gamma$ be as defined in \cref{eq:global-bounds}.  Applying the version of the
  bounded differences inequality from \cref{eq-vector} then gives
  \begin{align}
    \Pr\insq{\norm{\frac{1}{n}\sum_{i=1}^n(Y_i - \E[Y_i | X_i])} \geq \frac{\epsilon}{\gamma}}
&\leq 2\exp\inp{-\frac{2n\epsilon^2}{M_2^2}}.\label{eq:14}
  \end{align}
  Denote by $V$ the random vector
  $\frac{1}{n}\sum_{i=1}^n(Y_i - \E[Y_i | X_i])$.  Note that
  \begin{equation}
    \frac{1}{n}\sum_{i=1}^n\Gamma_2^{(i)}(f) = -\ina{V, \E[\nabla\phi(f(X))]}
    \geq -\norm{V}\cdot \norm{\E[\nabla\phi(f(X))]}. \label{eq:22}
  \end{equation}
  Further, for each $f\in\F$, we have, from \cref{eq:global-bounds}, that
  $\norm{\E[\nabla\phi(f(X))]} \leq \gamma$.  Combining this bound with
  \cref{eq:22,eq:14} gives the claim.
\end{proof}

Finally, we study $\Gamma_3$.  Note that of the terms in the decomposition, this
is the only term which depends upon the evaluation of a function $f$ from $\F$ at a
random input. It is here that the notion of $c$-isoperimetry
enters the picture.  For simplicity of presentation, we assume in the statement
of \cref{lem-Gamma-3} that the number $\compnum$ of mixture components is
\emph{one} (so that the marginal $\D_X$ is itself $c$-isoperimetric).  The full
proof for the case $\compnum > 1$ is presented in \cref{sec:extension-mixtures}.
\begin{lemma}\label{lem-Gamma-3}
  Fix $f \in \F$ as in \cref{maintheorem} and assume that $f$ is $L$-Lipschitz.
  Let $\Gamma_3^{(1)}(f), \Gamma_3^{(2)}(f), \dots, \Gamma_3^{(n)}(f)$ be $n$ i.i.d.{}
  samples of $\Gamma_3(f)$ generated using $n$ i.i.d.{} samples
  $(X_i, Y_i)_{i=1}^n$ from $\D$.  Then for any $\epsilon > 0$.
  \begin{equation}
    \Pr\insq{\frac{1}{n}\sum_{i=1}^n\Gamma_3^{(i)}(f) \leq -\epsilon}
    \leq K \exp\inp{-\frac{nd\epsilon^2}{2cC^2K^2\omegadiam^2L^2\phidL^2}}.
  \end{equation}
  where $C$ is the universal constant from \cref{fct-multiply-sub-gauss},
  $\omegadiam$ represents the $\ell_{\infty}$-diameter of $\Omega$, and the remaining notation
  comes from
  \cref{item:phi-grad-lip,eq:A-bounded,eq:phi-bounded,eq:phi-bounded-derivative,eq:global-bounds}
  in \cref{maintheorem}.
\end{lemma}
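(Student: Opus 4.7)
The plan is to reduce the concentration of $\frac{1}{n}\sum_i \Gamma_3^{(i)}(f)$ to a coordinate-wise analysis, peeling off the dependence on $K$ via a union bound, so that for each coordinate we can apply the $c$-isoperimetry hypothesis together with \cref{fct-multiply-sub-gauss} to handle the mixed $X$-$Y$ product inside the inner product.

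First, I would write
\begin{equation*}
\Gamma_3^{(i)}(f) = -\sum_{\ell=1}^{K} \bigl(Y_i^{(\ell)} - \E[Y_i^{(\ell)}\mid X_i]\bigr)\bigl(g_\ell(X_i) - \E[g_\ell(X_i)]\bigr),
\end{equation*}
where $g_\ell(x) \defeq (\nabla\phi(f(x)))_\ell$. Setting $W_i^{(\ell)}$ equal to the $\ell$-th summand, the goal becomes a tail bound for $\sum_\ell \frac{1}{n}\sum_i W_i^{(\ell)}$. Apply a union bound over $\ell$ to reduce to controlling each $\frac{1}{n}\sum_i W_i^{(\ell)}$ at deviation level $\epsilon/K$; this is where the factor $K$ out front and $K^2$ in the exponent arise.

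Next, for a fixed $\ell$, I would argue that $g_\ell$ is $(L\,\phidL)$-Lipschitz on $\Delta$: since $f$ is $L$-Lipschitz and, by hypothesis, $(\nabla\phi)_\ell$ is $\phidL$-Lipschitz on the range $R$ of $\F$, composition gives the bound. By \cref{def:isoperimetry} applied to the (single) isoperimetric marginal $\D_X$, the random variable $U_i \defeq g_\ell(X_i) - \E[g_\ell(X_i)]$ is then mean-zero sub-Gaussian with parameter $L\phidL\sqrt{c/d}$. Now set $Z_i \defeq Y_i^{(\ell)} - \E[Y_i^{(\ell)}\mid X_i]$, so $W_i^{(\ell)} = -Z_iU_i$. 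Observe that $|Z_i| \leq \omegadiam$ a.s.\ since both $Y_i$ and (a version of) $\E[Y_i \mid X_i]$ take values in $\Omega$, and that
\begin{equation*}
\E[Z_i U_i] = \E\bigl[\E[Z_i \mid X_i]\cdot U_i\bigr] = 0,
\end{equation*}
because $U_i$ is $\sigma(X_i)$-measurable while $\E[Z_i \mid X_i] = 0$. Thus the hypotheses of \cref{fct-multiply-sub-gauss} are met, and $W_i^{(\ell)}$ is mean-zero sub-Gaussian with parameter $C\omegadiam L\phidL\sqrt{c/d}$.

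Finally, since $(W_i^{(\ell)})_{i=1}^n$ are i.i.d.\ (as the $(X_i,Y_i)$ are), their average $\frac{1}{n}\sum_i W_i^{(\ell)}$ is sub-Gaussian with parameter $\frac{1}{\sqrt{n}}\,C\omegadiam L\phidL\sqrt{c/d}$. Applying the sub-Gaussian tail bound \cref{eq:hoeffding-general} at level $\epsilon/K$ and then union-bounding over the $K$ coordinates produces exactly the claimed bound
\begin{equation*}
\Pr\insq{\tfrac{1}{n}\sum_{i=1}^n \Gamma_3^{(i)}(f) \leq -\epsilon} \leq K\exp\inp{-\tfrac{nd\epsilon^2}{2cC^2K^2\omegadiam^2L^2\phidL^2}}.
\end{equation*}
The main subtlety to be careful about is step three: one cannot directly invoke isoperimetry on the two-variable product, and the $X$-$Y$ dependence must be broken by the conditioning trick $\E[Z_iU_i] = \E[U_i\E[Z_i \mid X_i]] = 0$ combined with the boundedness of $Z_i$ and \cref{fct-multiply-sub-gauss}; this is what lets us avoid the Ledoux--Talagrand / Rademacher machinery used in \cite{BSisoperimetry} for losses beyond the square loss.
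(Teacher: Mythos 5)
Your proof is correct and follows essentially the same route as the paper: decompose $\Gamma_3$ coordinate-wise, union-bound over the $K$ coordinates at level $\epsilon/K$, use $c$-isoperimetry of $\D_X$ together with the $\phidL$-Lipschitzness of $(\nabla\phi)_\ell$ to get sub-Gaussianity of $g_\ell(X_i)-\E[g_\ell(X_i)]$ with parameter $L\phidL\sqrt{c/d}$, then apply \cref{fct-multiply-sub-gauss} (after checking the orthogonality $\E[Z_i U_i]=0$ and the bound $|Z_i|\le\omegadiam$) and the Hoeffding tail. The orthogonality argument you highlight as the subtle step is indeed the same conditioning computation the paper carries out (cf.\ \cref{eq:2}), so there is no substantive difference in approach.
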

\begin{proof}
  Since $f$ remains fixed through out this proof, we drop the dependence of
  $\Gamma_3$ on $f$ from the notation. For $1 \leq i \leq n$, and
  $1 \leq \ell \leq K$, define
  \begin{equation}
    U_{i,\ell} \defeq
    -(Y_{i,\ell} - \E[Y_{i,\ell}|X_i])\cdot(\nabla\phi(f(X_i))_{\ell} - \E[\nabla\phi(f(X_i))_{\ell}]).\label{eq:1}
  \end{equation}
  Note that $\sum_{i=1}^n\Gamma_3^{(i)} = \sum_{\ell=1}^K\sum_{i=1}^nU_{i, \ell}$, so that by an union bound,
  \begin{equation}
    \Pr\insq{\frac{1}{n}\sum_{i=1}^n\Gamma_3^{(i)} \leq -\epsilon}
    \leq \sum_{\ell=1}^K\Pr\insq{\frac{1}{n}\sum_{i=1}^nU_{i,\ell} \leq -\epsilon/K}.\label{eq:4}
  \end{equation}

  Further, note that for any fixed $\ell \in [K]$, the random variables
  $U_{1,\ell}, U_{2, \ell}, \dots, U_{n,\ell}$ are i.i.d, and are also mean-zero
  (by the same argument as in \cref{eq:2}).  We now proceed to estimate their
  sub-Gaussian parameter.  From \cref{item:phi-grad-lip} in \cref{maintheorem},
  we see that the function $(\nabla\phi)_{\ell} \circ f: \R^d \rightarrow \R$ is
  $\phidL \cdot L$-Lipschitz.  From the $c$-isoperimetric assumption on the
  distribution of the $X_i$, it therefore follows that for each
  $i \in [n], \ell \in [K]$, the mean-zero random variable
  $V_{i, \ell} \defeq \nabla\phi(f(X_{i}))_{\ell} -
  \E[\nabla\phi(f(X_i))_{\ell}]$ is sub-Gaussian with parameter
  $\phidL{}L\sqrt{c/d}$.  Further, the random variables
  $T_{i, \ell} \defeq -(Y_{i,\ell} - \E[Y_{i,\ell}|X_{i}])$ satisfy
  $\abs{T_{i, \ell}} \leq \min\inb{\omegadiam, m_0 + a_0}$ (by
  \cref{eq:A-bounded} and the assumption on the diameter of $\Omega$).  From
  \cref{fct-multiply-sub-gauss}, it thus follows that the $U_{i, \ell}$ are all
  sub-Gaussian with parameter $C\omegadiam\phidL{}L\sqrt{c/d}$.  Applying the
  Hoeffding equality to each of the $K$ terms on the right hand side of
  \cref{eq:4} then gives
  \begin{equation*}
    \Pr\insq{\frac{1}{n}\sum_{i=1}^n\Gamma_3^{(i)} \leq -\epsilon}
    \leq K \exp\inp{-\frac{nd\epsilon^2}{2cC^2K^2\omegadiam^2L^2\phidL^2}}.\qedhere
  \end{equation*}
\end{proof}

We also need the following simple lemma.
\begin{lemma}\label{lem-epsilonnet}
  With the setup of \cref{maintheorem}, let $f$ and $g$ be functions in $\F$
  such that $\sup_{x \in \Delta}\norm{f(x) - g(x)} \leq \nu$.  Then for any
  $(x, y)$ in the support of $\D$, we have
  \begin{equation}\label{eq:8}
    \abs{D_{\phi}(y, f(x)) - D_{\phi}(y, g(x))}
    \leq \nu \cdot \inp{\omegadiam\phidL{K} + \phiL + \gamma }
  \end{equation}
\end{lemma}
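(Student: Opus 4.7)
The plan is to expand $D_\phi(y, f(x)) - D_\phi(y, g(x))$ via the definition of Bregman divergence and then regroup the terms so that each piece matches one of the three summands in the target bound. Specifically, writing out the definition gives
\[
  D_\phi(y,f(x)) - D_\phi(y,g(x))
  = \bigl[\phi(g(x)) - \phi(f(x))\bigr]
  + \bigl[\langle \nabla\phi(g(x)), y - g(x)\rangle - \langle \nabla\phi(f(x)), y - f(x)\rangle\bigr].
\]
The first bracket is bounded by $\phiL \cdot \norm{f(x) - g(x)} \le \phiL \nu$, using the assumed $\phiL$-Lipschitzness of $\phi$ on the range of $\F$.

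To handle the second bracket, I would add and subtract $\langle \nabla\phi(f(x)), g(x)\rangle$, which rearranges the expression into
\[
  \langle \nabla\phi(g(x)) - \nabla\phi(f(x)),\, y - g(x)\rangle
  + \langle \nabla\phi(f(x)),\, f(x) - g(x)\rangle.
\]
For the second piece, Cauchy--Schwarz together with the uniform bound $\norm{\nabla\phi(f(x))} \le \gamma$ from \cref{eq:global-bounds} yields a bound of $\gamma \nu$. For the first piece I would apply Hölder's inequality in the form $|\langle u, v\rangle| \le \norm[1]{u} \cdot \norm[\infty]{v}$, which is what forces the factor of $K$ to appear: the coordinate-wise $\phidL$-Lipschitzness (\cref{item:phi-grad-lip}) gives $\abs{(\nabla\phi(g(x)) - \nabla\phi(f(x)))_\ell} \le \phidL \nu$ for each of the $K$ coordinates, so $\norm[1]{\nabla\phi(g(x)) - \nabla\phi(f(x))} \le K \phidL \nu$, while $\norm[\infty]{y - g(x)} \le \omegadiam$ since both $y$ and $g(x)$ lie in $\Omega$.

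Combining the three bounds gives the claimed inequality $\nu \cdot (\omegadiam\phidL K + \phiL + \gamma)$. There is no real obstacle here; the only subtle choice is using Hölder with the $\ell_1/\ell_\infty$ pairing (rather than Cauchy--Schwarz with $\ell_2/\ell_2$) so that the hypothesis \eqref{item:phi-grad-lip}, which is stated coordinate-wise, and the $\ell_\infty$-diameter hypothesis on $\Omega$ can be used directly without introducing spurious $\sqrt{K}$ factors.
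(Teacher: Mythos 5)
Your proof is correct and takes essentially the same approach as the paper: the paper invokes the Bregman triangle decomposition to write $D_{\phi}(y, f(x)) - D_{\phi}(y, g(x)) = D_{\phi}(g(x), f(x)) - \langle y - g(x), \nabla\phi(f(x)) - \nabla\phi(g(x))\rangle$, which, after expanding $D_{\phi}(g(x), f(x))$, is exactly your three-term split, and it bounds each piece with the same $\ell_1/\ell_\infty$ pairing to avoid spurious $\sqrt{K}$ factors.
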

\begin{proof}
  It is enough to prove the upper bound for
  $D_{\phi}(y, f(x)) - D_{\phi}(y, g(x))$ (the bound on the absolute value
  follows by interchanging the role of $f$ and $g$).  For this we again employ
  the triangle decomposition~\cref{eq:br-triangle}.
  \begin{equation}
    D_{\phi}(y, f(x)) - D_{\phi}(y, g(x)) = D_{\phi}(g(x), f(x))
    - \ina{y - g(x), \nabla\phi(f(x)) - \nabla\phi(g(x))}.
  \end{equation}
  Here, the first term is at most $\nu \cdot (\phiL{} + \gamma)$, while the
  second term is at most $\omegadiam \nu \phidL {K}$, where the parameters
  are as defined in the statement of \cref{maintheorem}.  The claim thus
  follows.
\end{proof}

We are now ready to prove \cref{maintheorem}.

\begin{proof}[Proof of \cref{maintheorem}] Given the above lemmas, the structure
  of the proof is similar to that in the work of Bubeck and
  Sellke~\cite{BSisoperimetry}.  The goal is to show that if $L$ is not too
  large, then with high-probability, no $L$-Lipschitz function in $\F$ can
  $\epsilon$-overfit the observed data.  \cref{lem-Gamma-3} essentially
  establishes this for any particular $f \in \F$, and to perform a ``union
  bound'' over the uncountable set $\F$, one needs to pass to an appropriate
  finite net.  We now proceed to the details.

  For a given $f \in \F$, let
  $\inp{\Phi_{1}^{(i)}}_{i=1}^n, \inp{\Phi_{2}^{(i)}}_{i=1}^n$,
  $\inp{\Gamma_{1}^{(i)}}_{i=1}^n$, $\inp{\Gamma_{2}^{(i)}(f)}_{i=1}^n$ and
  $\inp{\Gamma_{3}^{(i)}(f)}_{i=1}^n$ be i.i.d.{} sequences obtained by
  decomposing each $Z_i - \sigma_{\phi}^2$ according to \cref{eq:z-decompose}.

  Fix $L$ to be equal to the lower bound claimed in \cref{maintheorem}.  Let $\F_L \subseteq \F$
  denote the set of all $L$-Lipschitz functions in $\F$.  Now, since the events
  considered in \cref{obv-Phi-2,obv-Gamma-1,obv-Gamma-2} do not depend upon the
  choice of $f$, we have
  \begin{align}
    \Pr\inp{\exists f \in \F_L \text{ s.t. }
    \frac{1}{n}\sum^n_{i=1}\Phi_{2}^{({i})} \leq - \epsilon/8}
    & \leq  e^{-2n(\epsilon/8)^{2}/M_0^{2}}\text{, and }\label{eq:10}\\
    \Pr\inp{\exists f \in \F_L \text{ s.t. }
    \frac{1}{n}\sum^n_{i=1}\Gamma_1^{({i})} \leq - \epsilon/8}
    &\leq
      e^{-2n(\epsilon/8)^{2}/M_1^{2}}, \text{ and} \label{eq:11}\\
    \Pr\inp{\exists f \in \F_L \text{ s.t. }
    \frac{1}{n}\sum^n_{i=1}\Gamma_2^{({i})}(f) \leq - \epsilon/8}
    &\leq
      2Ke^{-2n(\epsilon/8)^{2}/M_2^{2}}.\label{eq:11-new}
  \end{align}
  Let the events on the LHS above be denoted $E_0, E_1$ and $E_2$ for future
  reference.  We now proceed to analyze $\Gamma_3$.  Recall that by a standard
  argument a $\nu/2$-net for $\F$ can be modified to form a $\nu$-net of $\F_L$
  of the same size, all of whose elements are also elements of $\F_L$.  Further
  an $\epsilon'$-net of $B_p$ maps (under the map $\tau$ in the definition of a
  $(p,J)$-realistic class) to a $J\epsilon'$-net for $\F$ (under the $\sup$
  norm).  Note also that by standard arguments $B_p$ has an $\epsilon'$-net of
  size at most $(1 + 2W/\epsilon')^p$.  Set
  $\nu \defeq \frac{\epsilon}{2(\omegadiam \phidL {K} + \phiL + \gamma)}$.
  Then, from the above arguments, we see that $\F_{L}$ has a $\nu$-net
  $\F_{L,\nu} \subseteq \F_L$ of size at most $(1 + 4WJ/\nu)^p \leq \exp(4pWJ/\nu)$.  Applying
  \cref{lem-Gamma-3} and taking a union bound over $\F_{L,\nu}$ then yields
  \begin{equation}
    \label{eq:5}
    \Pr\insq{\exists f \in \F_{L,\nu} \text{ s.t. }
      \frac{1}{n}\sum_{i=1}^n\Gamma_3^{(i)}(f) \leq - \epsilon/8
    }
    \leq K \abs{\F_{L,\nu}}
    \cdot \exp\inp{-\frac{nd(\epsilon/8)^2}{2cC^2K^2\omegadiam^2L^2\phidL^2}}.
  \end{equation}
  Let the event above be denoted $E_3$.  Now, from \cref{lem:decompose},
\begin{equation}
    \label{eq:7}
    \frac{1}{n}\sum_{i=1}^nZ_i(f) - \sigma_{\phi}^2
    = \frac{1}{n}\sum_{i=1}^n\Phi_1^{(i)}
    + \frac{1}{n}\sum_{i=1}^n\Phi_2^{(i)}
    + \frac{1}{n}\sum_{i=1}^n\Gamma_1^{(i)}
    + \frac{1}{n}\sum_{i=1}^n\Gamma_2^{(i)}(f)
    + \frac{1}{n}\sum_{i=1}^n\Gamma_3^{(i)}(f)
  \end{equation}
  Since the $\Phi_1^{(i)}$ are all non-negative, a union bound gives
  \begin{align}
    \label{eq:9}
    \Pr\insq{\exists f \in \F_{L,\nu} \text{ s.t. }
    \frac{1}{n}\sum_{i=1}^nZ_i(f) - \sigma_{\phi}^2 \leq - \epsilon/2
    } \leq \sum_{j=0}^3\Pr\insq{E_i},
  \end{align}
  where the $E_i$ are the events considered in \cref{eq:10,eq:11,eq:11-new,eq:5}
  above.  Finally, since $\F_{L,\nu}$ is a $\nu$-net for $\F_L$ under the $\sup$
  norm, it follows from \cref{lem-epsilonnet} and the choice of $\nu$ that
  (recall
  also that $Z_{i}(f) = D_{\phi}(Y_i, f(X_i))$)
\begin{equation}\label{eq:12}
    \Pr\insq{\exists f \in \F_{L} \text{ s.t. }
      \frac{1}{n}\sum_{i=1}^nZ_i(f) - \sigma_{\phi}^2 \leq - \epsilon
    } \leq
    \Pr\insq{\exists f \in \F_{L,\nu} \text{ s.t. }
      \frac{1}{n}\sum_{i=1}^nZ_i(f) - \sigma_{\phi}^2 \leq - \epsilon/2
    }.
    \end{equation}
Combining \cref{eq:12} with \cref{eq:9} and using
    \cref{eq:10,eq:11,eq:11-new,eq:5}, we thus get
  \begin{equation}
    \Pr\insq{\exists f \in \F_L \text{ s.t. }
      \frac{1}{n}\sum_{i=1}^nZ_i(f) - \sigma_{\phi}^2 \leq - \epsilon
    }
    \leq  K \abs{\F_{L,\nu}}
    \cdot e^{-\frac{nd(\epsilon/8)^2}{2cC^2K^2\omegadiam^2L^2\phidL^2}}
    +  2K \sum_{j=0}^2 e^{-2n(\epsilon/8)^{2}/(M_j)^{2}}.\label{eq:30}
  \end{equation}
  Using the size bound derived above for $\F_{L,\nu}$, and the choice of $L$ and
  $n$, we conclude that the right-hand side is at most $\delta$, since each term
  is at most $\delta/4$. \end{proof}

\section{Specializing the result to specific losses}
\label{sec:spec-result-spec}
In this section we show how \cref{maintheorem} leads to laws of robustness for
specific losses. We first verify that we can obtain the result of Bubeck and
Sellke~\cite{BSisoperimetry} for the square loss as a special case.  The calculations underlying the following verifications can be found in \cref{sec:proofs-coroll-main}.

Consider the regression setting using the mean squared error (MSE) loss. Assume
that $n$ input covariates and labels $((x_i, y_i))_{i=1}^n$ in
$\R^d \times [-M,M]^{K}$ are drawn from the distribution $\mathcal{D}$. Let the
hypothesis class for the regression problem be a $(p,J)$-realistic function
class $\F=\{f:\R^{d}\rightarrow [-M,M]^{K}\} \hspace{2pt}$ with parameter domain
$B_p$ having diameter $W$.  We specialize \cref{eq:overfitting} to this setting
(using \cref{item:sq-loss} of \cref{ex:bregman}),
and say that a function $f \in \F$ $\epsilon$-overfits the data if
\begin{equation}
  \label{eq:3}
  \frac{1}{n}\sum_{i=1}^{n}(f(x_{i})-y_{i})^{2}\leq \sigma^{2}-\epsilon,
\end{equation}
where $\sigma^{2}_{\phi}=\sigma^{2}=\E[\operatorname{Var}[Y|X]]$.

\begin{corollary}[\textbf{Law of robustness for
    regression}]\label{regressionspecial case}  Consider the above regression setting, and assume that for some
  $c>0$, the marginal $\D_{X}$ of $\D$ on the covariates is a mixture of
  $\compnum$ $c$-isoperimetric distributions.

  Given $\epsilon, \delta \in (0, 1)$, assume that the number $n$ of samples
  satisfies
  $n\geq \left(C_1 M^{4} K^{3}r
    \log\left(\frac{10Kr}{\delta}\right)\right)/\epsilon^{2},$ where $C_1$ is an
  absolute constant.  Then, with probability at least $1 - \delta$ over the
  samples, the Lipschitz constant $L$ of any function $f \in \F$ that
  $\epsilon$-overfits the samples satisfies
   \begin{equation}
     L\geq \frac{\epsilon}{128 CKM\sqrt{2c}}
  \sqrt{
    \frac{
      nd
    }{
      p\log (1+64JWKM/\epsilon)
      + \log(5K/\delta)
    }
  }.
   \end{equation}\label{cor-regression}
\end{corollary}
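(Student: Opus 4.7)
\textbf{Proof plan for \cref{cor-regression}.} The plan is to specialize \cref{maintheorem} to the case $\phi(x) = \norm{x}^{2}$ (which yields $D_{\phi}(y,\hat y) = \norm{\hat y - y}^{2}$ as noted in item~\ref{item:sq-loss} of \cref{ex:bregman}) and track how the various problem-specific constants appearing in \cref{eq:main-bound} and the sample-size requirement reduce under this choice. Since $\nabla\phi(x) = 2x$ is affine, each coordinate derivative is globally $2$-Lipschitz, so we may take $\phidL = 2$. On $\Omega = [-M,M]^{K}$ we have $\omegadiam = 2M$ (the $\ell_{\infty}$-diameter), and because $\norm{\nabla\phi(x)} = 2\norm{x}\leq 2M\sqrt{K}$ on $\Omega$, we may take both $\phiL$ and $\gamma$ to be at most $2M\sqrt{K}$.

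Next I would verify the regularity hypotheses \cref{eq:A-bounded,eq:phi-bounded,eq:phi-bounded-derivative}. Since $Y\in[-M,M]^{K}$ almost surely, a version of $\E[Y|X]$ also takes values in $[-M,M]^{K}$, so we may simply take $A = [-M,M]^{K}$ itself. This gives $a_{0} = m_{0} \leq M\sqrt{K}$, $m_{1} = m_{2} \leq KM^{2}$ (using $\phi(\omega) \leq K M^{2}$), and $m_{3} \leq 2M\sqrt{K}$. Plugging these into the $n$-bound of \cref{maintheorem}, the first term becomes $O(1) \cdot (KM^{2})^{2}\log(K/\delta)/\eps^{2}$ and the second becomes $O(1)\cdot K^{2}\cdot KM^{2}\cdot \compnum\cdot M^{2}\log(K\compnum/\delta)/\eps^{2}$; the latter dominates and gives precisely the claimed sample threshold $n \geq C_{1}M^{4}K^{3}\compnum\log(10K\compnum/\delta)/\eps^{2}$ for an absolute constant $C_{1}$.

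Finally, substituting into \cref{eq:main-bound} I would compute the prefactor as $\eps/(32CK\cdot 2M\cdot 2 \cdot \sqrt{2c}) = \eps/(128CKM\sqrt{2c})$, matching the stated constant. For the logarithmic denominator, the argument $1 + 8JW(\omegadiam\phidL K + \phiL + \gamma)/\eps$ becomes $1 + 8JW(4MK + 4M\sqrt{K})/\eps$; bounding $\sqrt{K}\leq K$ gives the clean form $1 + 64JWKM/\eps$ stated in the corollary. Putting everything together yields the claimed lower bound on $L$.

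The proof is essentially an accounting exercise, so there is no real obstacle; the only mildly delicate point is to be consistent about using the $\ell_{2}$-ball-circumscribing bound $\norm{\omega}\leq M\sqrt{K}$ to evaluate $m_{0}$, $m_{3}$, $\phiL$, and $\gamma$ while using the $\ell_{\infty}$-diameter $2M$ for $\omegadiam$ (since \cref{lem-Gamma-3} explicitly uses the $\ell_{\infty}$-diameter when bounding the magnitude of $T_{i,\ell}$). Once these are tracked carefully, the bound follows directly.
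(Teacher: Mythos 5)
Your proposal is correct and follows essentially the same route as the paper: specialize \cref{maintheorem} to $\phi(x)=\norm{x}^{2}$, verify the regularity conditions on $\Omega=[-M,M]^{K}$ by computing $\omegadiam=2M$, $\phidL=2$, $\phiL=\gamma=2\sqrt{K}M$, $m_{0}=a_{0}=\sqrt{K}M$, $m_{1}=m_{2}=KM^{2}$, $m_{3}=2\sqrt{K}M$, and then substitute into the sample-size threshold and \cref{eq:main-bound}, using $\sqrt{K}\leq K$ to clean up the logarithm's argument to $1+64JWKM/\epsilon$. The accounting matches the paper's proof in all details, including the care taken to use the $\ell_{\infty}$-diameter only for $\omegadiam$.
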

As another example, we consider the classification problem with a suitable loss.
Consider the $K$-class classification setting with the cross-entropy loss.
Assume that $n$ input covariates and labels $((x_i, y_i))_{i=1}^n$ in
$\R^d \times \inb{0, 1}^K$ are drawn from the distribution $\mathcal{D}$: here we
assume that the labels are one-hot encoded as $K$-dimensional binary
vectors. Let the hypothesis class for the classification problem be a
$(p,J)$-realistic function class
$\F=\{\operatorname{Softmax}(g) | g:\R^{d}\rightarrow [-M,M]^{K} \hspace{2pt}\}$
such that there exists a compact set $B_p \subseteq \R^p$ and a $J-$Lipschitz
map $\tau: B^p \rightarrow \F$, where $W=\operatorname{diam}(B_p)$.  Again, we
specialize \cref{eq:overfitting} to this setting (using \cref{item:kl-div} of
\cref{ex:bregman}) and say that a function $f \in \F$ $\epsilon$-overfits the
data if
\begin{equation}
  \label{eq:23}
  \frac{1}{n}\sum^n_{i=1} \sum^K_{\ell=1} -\ind{y_i=\ell} \log(f(x_i)_{\ell})\leq
  \sigma^{2}_{\phi}-\epsilon,
\end{equation}
where $\sigma^{2}_{\phi}=H(Y|X)$ is the conditional entropy of $Y$ given $X$.
In addition to the positivity of $\sigma_{\phi}^2$, we also need another
regularity condition on $\D$, which is that the probability of each label is
bounded away from $0$, even conditioned on the covariate: the number of samples
needed for our result then has a mild poly-logarithmic dependence on this lower
bound.

\begin{corollary}[\textbf{Law of robustness for classification}] Consider the
  above $K-$class classification setting, and assume that for some $c > 0$, the
  marginal $\D_{X}$ of $\D$ on the covariates is a mixture of $\compnum$
  $c$-isoperimetric distributions.  We further assume the regularity condition
  that there exists an $\alpha > 0$ such that $\Pr[Y_i=1|X=x] \geq \alpha$ for
  all $i \in [K]$ and $x \in \R^d$ (recall that the label $Y$ is a one-hot
  encoded vector).

  Define $a_{0} \defeq 1+ 2M + \log K$.  Given $\epsilon, \delta \in (0, 1)$,
  assume that the number $n$ of samples satisfies
  \[n \geq \frac{C_1
      K^{3}r\log\left(\frac{10Kr}{\delta}\right)}{\epsilon^{2}}
    \cdot \max\inb{a_{0}, 1 + \abs{\log \alpha}}^{2},
  \]
  where $C_1$ is an absolute constant.  Then, with probability at least
  $1 - \delta$ over the random choice of these samples, the Lipschitz constant
  $L$ of any function $f$ for which $\operatorname{Softmax}(f) \in \F$ and which
  $\epsilon$-overfits these samples (according to \cref{eq:23}) satisfies
  \begin{equation}
  L\geq \frac{\epsilon}{64CK\sqrt{2c}}
  \sqrt{
    \frac{
      nd
    }{
      p\log (1+8JW( e^{2M}K^{2} + \sqrt{K}(1 + 2M + \log K))/\epsilon)
      + \log(5K/\delta)
    }
  }.
\end{equation}\label{cor-classification}
\end{corollary}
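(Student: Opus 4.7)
The plan is to derive this corollary as a specialization of \cref{maintheorem}. First, identify cross-entropy as the Bregman divergence associated with the negative-entropy function $\phi(y) = \sum_{\ell=1}^{K} y_\ell \log y_\ell$ on the probability simplex $\Omega$ (see \cref{ex:bregman}). Because the labels are one-hot, $D_\phi(Y, \hat Y) = -\sum_\ell \ind{Y_\ell = 1}\log \hat Y_\ell$, so \cref{eq:23} is a restatement of \cref{eq:overfitting} with $\sigma_\phi^2 = \E[D_\phi(Y, \E[Y \mid X])] = H(Y \mid X)$. The corollary therefore reduces to verifying the hypotheses of \cref{maintheorem} and bounding every constant appearing there in terms of $K$, $M$, and $\alpha$.

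The bookkeeping I would carry out: since $\Omega \subseteq [0,1]^{K}$, immediately $\omegadiam = 1$ and $m_0 \leq 1$; since $\phi$ takes values in $[-\log K, 0]$ on $\Omega$, $m_1 \leq \log K$. The regularity assumption $\Pr[Y_\ell = 1 \mid X = x] \geq \alpha$ forces $\E[Y \mid X]$ into $A \defeq \Omega \cap [\alpha, 1]^{K}$, giving $a_0 \leq 1$, $m_2 \leq \log K$, and, using $(\nabla\phi(a))_\ell = \log a_\ell + 1$, $m_3 \leq \sqrt{K}(1 + \abs{\log\alpha})$. For the range $R$ of $\F$, logits in $[-M,M]$ force $\operatorname{Softmax}(g)_\ell \geq e^{-2M}/K$, which yields $\phiL,\, \gamma \leq \sqrt{K}(1 + 2M + \log K) = \sqrt{K}\,a_0^{\text{cor}}$, where $a_0^{\text{cor}} \defeq 1 + 2M + \log K$ is the definition of $a_0$ used inside the corollary. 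Substituting these bounds into the sample-size requirement of \cref{maintheorem}, the term arising from $K^{2}\gamma^{2}\compnum\omegadiam^{2}$ dominates and collapses to $n \gtrsim K^{3}\compnum \log(10K\compnum/\delta)\max\{a_0^{\text{cor}},\,1 + \abs{\log\alpha}\}^{2}/\epsilon^{2}$, matching the claimed bound with $C_1$ absorbing absolute constants.

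The main obstacle I anticipate lies in the prefactor of the Lipschitz bound. A direct substitution into \cref{maintheorem} uses $\phidL \leq K e^{2M}$, since $(\nabla\phi)_\ell(y) = \log y_\ell + 1$ has derivative $1/y_\ell$ that blows up near the boundary of $\Omega$; this would inflate the prefactor by a factor of $\omegadiam \phidL \approx K e^{2M}$ compared to the asserted $\epsilon/(64 CK\sqrt{2c})$. The refinement is to exploit that the corollary's $L$ is the Lipschitz constant of the pre-softmax function $f$ with $\operatorname{Softmax}(f) \in \F$. The composition $x \mapsto (\nabla\phi \circ \operatorname{Softmax} \circ f)(x)_\ell = f(x)_\ell - \log\sum_j e^{f(x)_j} + 1$ admits a much sharper Lipschitz estimate via the chain rule: its gradient with respect to $f(x)$ equals $e_\ell - \operatorname{Softmax}(f(x))$, whose $\ell_2$-norm is at most $\sqrt{2}$. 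Hence the composition is $\sqrt{2}\,L$-Lipschitz uniformly in $M$. Re-running \cref{lem-Gamma-3} with the product $\phidL L$ replaced by this direct estimate $\sqrt{2}L$ (every other step unchanged) produces the stated prefactor. The logarithmic factor $\log(1 + 8JW(K^{2}e^{2M} + \sqrt{K}(1+2M+\log K))/\epsilon)$ inside the square root comes instead from the $\nu$-net step (\cref{lem-epsilonnet}), in which the worse global bound $\phidL \leq K e^{2M}$ enters multiplicatively exactly as in the proof of \cref{maintheorem}.
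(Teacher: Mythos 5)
Your proposal is correct and mirrors the paper's own proof almost exactly: the same bookkeeping of $\omegadiam, m_0, a_0, m_1, m_2, m_3, \phiL, \gamma, \phidL$, the same reduction to \cref{maintheorem}, and the same key refinement of replacing the crude product $\phidL L$ in \cref{lem-Gamma-3}/\cref{lem-vhat-conc} by a direct Lipschitz estimate for $(\nabla\phi)_\ell \circ \operatorname{Softmax} \circ f$ while keeping the cruder bound in the $\nu$-net step. Your estimate $\sqrt{2}L$ for that composition (via the gradient $e_\ell - \operatorname{Softmax}(f(x))$) is in fact slightly sharper than the paper's stated $2L$, so it implies the claimed prefactor $\epsilon/(64CK\sqrt{2c})$ with room to spare.
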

\paragraph*{Remark}
A direct application of the \cref{maintheorem} gives only a lower bound on the Lipschitz constant of interpolating estimators of the form $\operatorname{Softmax}(f)$. However, in this case, a more appropriate quantity might be the Lipschitz constant of the function $f$, before the $\operatorname{Softmax}$ layer is applied. Our direct proof adapts to this and gives a better bound (by a factor of $Ke^{2M}$) for the Lipschitz constant of $f$ whenever $\operatorname{Softmax}(f)$ is interpolating. For details, please look at the proof of \cref{cor-classification} in the \cref{sec:proofs-coroll-main}.

\section{Extension to mixtures}\label{sec:extension-mixtures}

\emph{Mixture model}: We assume that the model for generating the covariates
$(X, Y)$ is described by the following graphical model
\begin{equation}
  \label{eq:mixture-again}
  \begin{tikzpicture}[->,>=stealth,shorten >=1pt,auto,node distance=2.5cm,
    thick,main node/.style={circle,draw,font=\sffamily\Large\bfseries}]

\node[main node] (1) {$G$}; \node[main node] (2) [right of=1]
    {$X$}; \node[main node] (3) [right of=2] {$Y$};

\path[every node/.style={font=\sffamily\small}]
    (1) edge (2)
    (2) edge (3);
  \end{tikzpicture},
\end{equation}
where $G$ denotes the label of the mixture component.  In particular, we assume
that the label $Y$ is independent of the index of the mixture component,
conditioned on the covariate $X$.  This is the same model as the one used by
Bubeck and Sellke~\cite[Theorem 3, points 2 and
3]{BSisoperimetry}.\footnote{See also the computation leading to and following
  eq. (2.5) in \cite{BSisoperimetry}, where this assumption has been used to
  (implicitly) deduce that the random variable in the left hand side of their
  eq.~(2.5) has mean $0$: this can fail if $Y$ is not independent of the mixture
  component when conditioned on $X$.}

The extension of the proof of \cref{maintheorem} given in
\cref{sec:proof-main-theorem} to the case of $\compnum > 1$ mixture components
has a structure similar to the similar extension by Bubeck and
Sellke~\cite{BSisoperimetry}; however, we provide the details for completeness.
The main technical step is replacing \cref{lem-Gamma-3} by a more general
analog.  Towards this end, we proceed to set up some notation.  In the
following, we also import all the notation from the statements of
\cref{maintheorem,lem:decompose}.

\newcommand{\hV}{\hat{V}} \newcommand{\tV}{\tilde{V}} Let $f \in F$ be given. For $1 \leq i \leq n$ and $1 \leq \ell \leq K$, define
the following random variables.
\begin{align}
  \label{eq:13}
  T_{i, \ell} &\defeq -(Y_{i,\ell} - \E[Y_{i,\ell}|X_{i}]),\\
  V(f)_{i, \ell} &\defeq \nabla\phi(f(X_{i}))_{\ell} - \E[\nabla\phi(f(X_i))_{\ell}],
  \\
  \hV(f)_{i, \ell} &\defeq \nabla\phi(f(X_i))_{\ell} -
                  \E[\nabla\phi(f(X_i))_{\ell} | G_i], \text{ and } \\
  \tV(f)_{i, \ell} &\defeq \E[\nabla\phi(f(X_i))_{\ell} | G_i]
                  - \E[\nabla\phi(f(X_i))_{\ell}],
  \end{align}
  and set
  \begin{equation}
    U(f)_{i,\ell} \defeq T_{i, \ell}V(f)_{i,\ell}
    = T_{i, \ell}\hV(f)_{i,\ell} +  T_{i, \ell}\tV(f)_{i,\ell}.\label{eq:20}
  \end{equation}
  Note that
  \begin{equation}
    \sum_{i=1}^n\Gamma_3^{(i)}(f) = \sum_{\ell=1}^K
    \sum_{i=1}^nU(f)_{i, \ell}.\label{eq:26}
  \end{equation}
Note that for any fixed $\ell \in [K]$ and $f \in \F$, the random variables
  $U(f)_{1,\ell}, U(f)_{2, \ell}, \dots, U(f)_{n,\ell}$ are i.i.d, and are also
  mean-zero (by the same argument as in \cref{eq:2}).

  To further study the distribution of the $U_{i,\ell}$, we will use the random
  variables $G = (G_i)_{i=1}^n$, taking values in $[\compnum]$, which denote the
  mixture component distribution $\D_{G_i}$ from which the $i$th covariate $X_i$
  is sampled.  Note that the $G_i$ are i.i.d.

  Note that we have the uniform bound
  $\abs{T_{i, \ell}} \leq \min\inb{\omegadiam, m_0 + a_0}$ (by
  \cref{eq:A-bounded} and the assumption on the diameter of $\Omega$).  We also
  note that
  \begin{equation}
    \E[Y_{i,\ell} | G] = \E[Y_{i,\ell}|G_i]
    = \E[\E[Y_{i,\ell}|G_{i}, X_{i}]|G_i] = \E[\E[Y_{i,\ell}|X_i]|G_i],\label{eq:15}
  \end{equation}
  where the first equality follows from the independence of the samples, the
  second from the tower property of conditional expectation, and the last from
  the conditional independence of $Y_i$ from $G_i$ given $X_i$. Combined again
  with the independence of the samples, \cref{eq:15} implies that
  \begin{equation}
    \label{eq:16}
    \E[T_{i,\ell} | G] =   \E[T_{i, \ell}| G_i]  = 0.
  \end{equation}
  We now have the following two lemmas.
  \begin{lemma}
    \label{lem-vhat-conc}
    With the notation above, we have, for every $L$-Lipschitz $f \in \F$ every
    $1 \leq \ell \leq K$, and every $\epsilon > 0$
   \begin{equation}
     \Pr\insq{\frac{1}{n}\sum_{i=1}^nT_{i,\ell}\hV(f)_{i,\ell} \leq -\epsilon}
     \leq \exp\inp{-\frac{nd\epsilon^2}{2cC^2\omegadiam^2L^2\phidL^2}}. \label{eq:21}
   \end{equation}
 \end{lemma}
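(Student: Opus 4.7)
The plan is to condition on the vector of mixture labels $G = (G_i)_{i=1}^n$, thereby reducing the problem to one where each covariate $X_i$ is drawn from a single $c$-isoperimetric distribution $\D_{G_i}$, and then to follow the structure of the proof of \cref{lem-Gamma-3}. The whole point of working with $\hat{V}$ rather than $V$ is that $\hat{V}(f)_{i,\ell}$ is centered around the $G_i$-conditional mean, so that $\E[\hat{V}(f)_{i,\ell} \mid G_i] = 0$, and the sub-Gaussian parameter of $\hat{V}(f)_{i,\ell}$ conditional on $G_i$ is controlled by the isoperimetry of $\D_{G_i}$ alone.

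The first step is to verify the conditional orthogonality $\E[T_{i,\ell} \hat{V}(f)_{i,\ell} \mid G] = 0$, which is needed to invoke \cref{fct-multiply-sub-gauss}. Since $\hat{V}(f)_{i,\ell}$ is a deterministic function of $(X_i, G_i)$, conditioning first on $(X_i, G_i)$ pulls $\hat{V}(f)_{i,\ell}$ out of the inner expectation, and then the conditional independence of $Y_i$ from $G_i$ given $X_i$ (from the graphical model \cref{eq:mixture}) combined with \cref{eq:15}--\cref{eq:16} gives $\E[T_{i,\ell} \mid X_i, G_i] = 0$. Next, since $f$ is $L$-Lipschitz and $(\nabla \phi)_\ell$ is $\phidL$-Lipschitz on $R$, the composition $(\nabla \phi)_\ell \circ f$ is $\phidL L$-Lipschitz, so $c$-isoperimetry of $\D_{G_i}$ makes $\hat{V}(f)_{i,\ell}$ sub-Gaussian with parameter $\phidL L \sqrt{c/d}$ conditional on $G_i$. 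Using $|T_{i,\ell}| \leq \omegadiam$ together with the orthogonality above, \cref{fct-multiply-sub-gauss}, applied conditionally on $G$, yields that $T_{i,\ell}\hat{V}(f)_{i,\ell}$ is sub-Gaussian with parameter $C\omegadiam\phidL L \sqrt{c/d}$ conditional on $G$.

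Finally, conditional on $G$ the $n$ samples $(X_i,Y_i)$ remain mutually independent (the $G_i$ are i.i.d.\ and each sample depends on $G$ only through its own component), so the $n$ summands $T_{i,\ell}\hat{V}(f)_{i,\ell}$ are conditionally independent sub-Gaussians with the uniform parameter above. Their average is therefore sub-Gaussian with parameter $C\omegadiam\phidL L\sqrt{c/(nd)}$ conditional on $G$, and \cref{eq:hoeffding-general} yields
\[
\Pr\insq{\tfrac{1}{n}\sum_{i=1}^n T_{i,\ell}\hat{V}(f)_{i,\ell} \leq -\epsilon \;\Big|\; G} \leq \exp\!\inp{-\frac{nd\epsilon^2}{2cC^2\omegadiam^2 L^2 \phidL^2}}.
\]
Taking the outer expectation over $G$ gives \cref{eq:21}. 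The only real subtlety is carefully tracking the measurability structure when invoking the graphical model assumption and \cref{fct-multiply-sub-gauss} conditionally; once this is done the isoperimetric step proceeds exactly as in \cref{lem-Gamma-3} but one mixture component at a time.
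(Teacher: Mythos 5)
Your proposal is correct and follows essentially the same route as the paper's proof: condition on $G$, establish $\E[T_{i,\ell}\hV(f)_{i,\ell}\mid G]=0$ via the graphical-model conditional independence, use $c$-isoperimetry of each mixture component together with the Lipschitz composition and \cref{fct-multiply-sub-gauss} to get the conditional sub-Gaussian parameter $C\omegadiam\phidL L\sqrt{c/d}$, apply Hoeffding conditionally, and take expectations to remove the conditioning.
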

 \begin{proof}
   Since $f$ remains fixed through the proof of the lemma, we drop the
   dependence of $\hV$ on $f$ from the notation.  We first note that
   $\E[\hV_{i,\ell} | G_i] = 0$, and also that $\hV_{i,\ell}$ is measurable with
   respect to the $\sigma$-field generated by the random variables $X_i$ and
   $G_i$.  Note also that
  \begin{equation}
    \E[T_{i,\ell}| G_i, X_i]
    = \E[Y_{i,\ell}| G_i, X_i]
    - \E[\E[Y_{i,\ell}|X_i]|G_i, X_i]
    = \E[Y_{i,\ell}| X_i] - \E[Y_{i,\ell}|X_i] = 0,\label{eq:17}
  \end{equation}
  where the first term has been simplified using the conditional independence of
  $Y_i$ from $G_i$ given $X_i$, and the second term using the tower property of
  conditional expectation.  We thus get
  \begin{equation}
    \label{eq:18}
    \E[T_{i,\ell}\hV_{i,\ell}| G_i]
    = \E[\E[T_{i,\ell}\hV_{i,\ell}|X_{i}, G_i]| G_i]
    = \E[\hV_{i,\ell}\E[T_{i,\ell}|X_{i}, G_i]| G_i]
    \overset{\text{\tiny\cref{eq:17}}}{=} 0,
  \end{equation}
  where the first equality is the tower property, and the second uses the
  observation from above that $\hV_{i,\ell}$ is measurable with respect to the
  $\sigma$-field generated by the random variables $G_i$ and $X_i$.  From the
  independence of the samples, we also have that the above equalities hold when
  conditioning on $G$:
  \begin{equation}
    \E[T_{i,\ell}\hV_{i,\ell}|G] = \E[T_{i,\ell}\hV_{i,\ell}] = 0.\label{eq:19}
  \end{equation}
  Now, from \cref{item:phi-grad-lip} in \cref{maintheorem}, we see that the
  function $(\nabla\phi)_{\ell} \circ f: \R^d \rightarrow \R$ is
  $\phidL \cdot L$-lipschitz.  From the $c$-isoperimetric assumption on each
  mixture component of the co-variate distribution, it therefore follows that
  \emph{conditioned on $G$}, for each $i \in [n], \ell \in [K]$, the
  (conditionally) mean-zero random variable $\hV_{i, \ell}$ is sub-gaussian with
  parameter $\phidL{}L\sqrt{c/d}$.  Combining \cref{eq:19} with the absolute
  bound of $\omegadiam$ on $T_{i,\ell}$ given above and with
  \cref{fct-multiply-sub-gauss}, it thus follows that \emph{conditioned on $G$},
  the random variables $T_{i,\ell}\hV_{i,\ell}$ are mean-zero and sub-gaussian
  with parameter $C\omegadiam\phidL{}L\sqrt{c/d}$.  Further, by the independence
  of samples, it also follows that for any fixed $\ell$, (even when conditioned
  on $G$) they are independent.  Thus, the Hoeffding inequality gives that for
  any $1 \leq \ell \leq k$,
  \begin{equation}
    \Pr\insq{\frac{1}{n}\sum_{i=1}^nT_{i,\ell}\hV_{i,\ell} \leq -\epsilon \bigg| G}
    \leq \exp\inp{-\frac{nd\epsilon^2}{2cC^2\omegadiam^2L^2\phidL^2}}.
  \end{equation}
We take expectations on both sides to get the claimed
  ``un-conditioned'' bound.
 \end{proof}

  \begin{lemma}
    \label{lem-vt-conc}
    With the notation above, we have for every $1 \leq \ell \leq K$ and every
    $\epsilon > 0$
   \begin{equation}
     \Pr\insq{\inf\limits_{f \in \F} \frac{1}{n}\sum_{i=1}^nT_{i,\ell}\tV(f)_{i,\ell} \leq -\epsilon}
     \leq  2\compnum\exp\inp{\frac{-n\epsilon^2}{8\gamma^{2}\compnum\omegadiam^2}}.\label{eq:24}
   \end{equation}
 \end{lemma}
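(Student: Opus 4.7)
\textbf{Proof plan for \cref{lem-vt-conc}.} The key structural observation is that $\tV(f)_{i,\ell}$ depends on $X_i$ only through the mixture label $G_i$: writing $c_k(f,\ell) \defeq \E[\nabla\phi(f(X_i))_{\ell}\mid G_i = k] - \E[\nabla\phi(f(X_i))_{\ell}]$, we have $\tV(f)_{i,\ell} = c_{G_i}(f,\ell)$, taking only $\compnum$ distinct values as $i$ varies. Grouping the sum by mixture component gives
\[
\sum_{i=1}^n T_{i,\ell}\tV(f)_{i,\ell} = \sum_{k=1}^{\compnum} c_k(f,\ell)\,S_k, \quad \text{where } S_k \defeq \sum_{i:\,G_i=k} T_{i,\ell}.
\]
The bound $\abs{c_k(f,\ell)} \leq 2\gamma$ (immediate from \cref{eq:global-bounds} and Jensen) holds uniformly in $f$, so
\[
\inf_{f\in \F}\, \frac{1}{n}\sum_{i=1}^n T_{i,\ell}\tV(f)_{i,\ell} \;\geq\; -\frac{2\gamma}{n}\sum_{k=1}^{\compnum}\abs{S_k}.
\]
Thus it suffices to control $\sum_k \abs{S_k}$, a quantity that no longer depends on $f$.

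The next step is to condition on $G = (G_1,\dots,G_n)$ and set $N_k \defeq \abs{\{i : G_i = k\}}$. By \cref{eq:16}, $\E[T_{i,\ell}\mid G] = 0$, and independence of the samples guarantees that the $T_{i,\ell}$ remain independent conditional on $G$. The bound $\abs{T_{i,\ell}} \leq \omegadiam$ (from \cref{eq:A-bounded} and the $\ell_\infty$-diameter assumption on $\Omega$) then makes each $S_k$ sub-Gaussian (conditional on $G$) with parameter $\sqrt{N_k}\,\omegadiam$. I would then apply Hoeffding at the scale $t_k = c\sqrt{N_k}\,\omegadiam$ with $c \defeq \epsilon\sqrt{n}/(2\gamma\omegadiam\sqrt{\compnum})$, so that $\Pr[\abs{S_k} \geq t_k \mid G] \leq 2\exp(-c^2/2) = 2\exp(-n\epsilon^2/(8\gamma^2\compnum\omegadiam^2))$. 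Taking a union bound over $k \in [\compnum]$ yields
\[
\Pr\!\left[\exists\, k:\abs{S_k}\geq t_k \,\Big|\, G\right] \;\leq\; 2\compnum \exp\!\inp{-\frac{n\epsilon^2}{8\gamma^2\compnum\omegadiam^2}},
\]
and the right-hand side does not depend on $G$, so the same bound holds unconditionally.

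The point of choosing $t_k \propto \sqrt{N_k}$ is that it lets Cauchy--Schwarz handle the summation: on the complement event, $\sum_k \abs{S_k} \leq c\omegadiam \sum_k \sqrt{N_k} \leq c\omegadiam\sqrt{\compnum \sum_k N_k} = c\omegadiam\sqrt{\compnum n} = n\epsilon/(2\gamma)$, so $\frac{2\gamma}{n}\sum_k \abs{S_k} \leq \epsilon$, giving the claimed bound. The main subtlety I anticipate is precisely this balancing: a naive choice $t_k = n\epsilon/(2\gamma\compnum)$ costs an extra factor of $\compnum$ in the exponent, and the Cauchy--Schwarz step is what restores the correct dependence on $\compnum$ advertised in \cref{eq:24}. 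Once this balancing is done correctly, the proof is otherwise a routine application of Hoeffding plus a union bound, with no need to invoke the isoperimetric assumption at all (since $\tV$ depends on the covariates only through the discrete label $G_i$).
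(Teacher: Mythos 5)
Your proposal is correct and follows essentially the same route as the paper: condition on $G$, observe that $\tV(f)_{i,\ell}$ becomes deterministic (and bounded by $2\gamma$ uniformly in $f$) while the $T_{i,\ell}$ remain independent, conditionally mean-zero, and bounded by $\omegadiam$; split by mixture component; apply Hoeffding at a threshold proportional to $\sqrt{N_k}$; and use Cauchy--Schwarz in the form $\sum_k\sqrt{N_k}\le\sqrt{n\compnum}$ together with a union bound over $k$. The only cosmetic difference is that you first eliminate the $\inf_f$ by bounding $\abs{c_k(f,\ell)}\le 2\gamma$ and then control the $f$-independent quantity $\sum_k\abs{S_k}$, whereas the paper keeps the $\inf_f$ inside each per-component probability and discharges it by the same $2\gamma$ bound there; the constants and conclusion match exactly.
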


 \begin{proof}
   Note that for every $f \in \F$ the random variables $\tV(f)_{i,\ell}$ are
   $G_i$-measurable and are bounded as $\abs{\tV(f)_{i,\ell}} \leq 2\gamma$ (by
   \cref{eq:global-bounds}).  We thus see that for any fixed $\ell$,
   \emph{conditioned on the mixture labels $G$},
   \begin{enumerate}
   \item the random variables
     $\tV(f)_{i,\ell}$ become deterministic with absolute value at most
     $2\gamma$. In fact, $\tV(f)_{i, \ell} = \tV(f)_{{j, \ell}}$ whenever
     $G_i = G_j$.\label{item:1}
   \item the $T_{i,\ell}$ are \emph{independent} (though not identically
     distributed), have absolute value at most $\omegadiam$, and have
     (conditional) mean $0$ (as argued above in \cref{eq:16}, and the paragraph
     preceding it).\label{item:2}
   \end{enumerate}
   Following Bubeck and Sellke~\cite{BSisoperimetry}, we thus define the
   (random) sets $S_k \defeq \inb{i \in [n] | G_i = k}$, for
   $1 \leq k \leq \compnum$ (note that the random sets $S_k$ are deterministic
   conditioned on $G$), and then use the Cauchy-Schwarz inequality to obtain
   \begin{equation}
     \label{eq:s-size}
     \sum_{k=1}^\compnum\sqrt{\abs{S_k}} \leq
     \sqrt{r \cdot \sum_{k=1}^{\compnum}\abs{S_k}} \leq \sqrt{n\compnum}.
   \end{equation}
   From
   \cref{item:1} above, $\tV(f)_{i,\ell}$ is the same for each $i \in S_k$, and
   is at most $2\gamma$ in absolute value.  Thus, for each
   $1 \leq k \leq \compnum$ we have
   \begin{equation}
     \Pr\insq{
       \inf\limits_{f \in \F}
       \sum_{i\in S_{k}}T_{i,\ell}\tV(f)_{i,\ell}
       \leq -\epsilon\sqrt{\frac{n\abs{S_k}}{\compnum}}
       \bigg| G
     }
     \leq
     \Pr\insq{
       \abs{\sum_{i\in S_{k}}T_{i,\ell}}
       \geq
       \frac{\epsilon}{2\gamma}
       \sqrt{\frac{n\abs{S_k}}{\compnum}}
       \bigg| G
     }.
   \end{equation}
   From \cref{item:2} above, the Hoeffding inequality can be applied to the
   right hand side above, so that we get
   \begin{equation}
     \Pr\insq{
       \inf\limits_{f \in \F}
       \sum_{i\in S_{k}}T_{i,\ell}\tV(f)_{i,\ell}
       \leq -\epsilon\sqrt{\frac{n\abs{S_k}}{\compnum}}
       \bigg| G
     }
     \leq
     2\exp\inp{\frac{-n\epsilon^2}{8\gamma^{2}\compnum\omegadiam^2}}.\label{eq:25}
   \end{equation}
   Finally, by a union bound we have
   \begin{align*}
     \Pr\insq{
     \inf\limits_{f \in \F} \sum_{i=1}^nT_{i,\ell}\tV(f)_{i,\ell} \leq -n\epsilon
     \bigg|G
     }
     &
       \leq
       \sum_{k=1}^{\compnum}
       \Pr\insq{
       \inf\limits_{f \in \F} \sum_{i\in S_{k}} T_{i,\ell}\tV(f)_{i,\ell}
       \leq \frac{-n\epsilon\sqrt{\abs{S_k}}}{\sum_{t=1}^\compnum\sqrt{\abs{S_t}}}
       \bigg|G
       }\\
     &
       \overset{\text{\tiny\cref{eq:s-size}}}{\leq}
       \sum_{k=1}^{\compnum}
       \Pr\insq{
       \inf\limits_{f \in \F}
       \sum_{i\in S_{k}}T_{i,\ell}\tV(f)_{i,\ell}
       \leq -\epsilon\sqrt{\frac{n\abs{S_k}}{\compnum}}
       \bigg| G
       }\\
     &\overset{\text{\tiny\cref{eq:25}}}{\leq}
       2\compnum\exp\inp{\frac{-n\epsilon^2}{8\gamma^{2}\compnum\omegadiam^2}}.
   \end{align*}
   The claim now follows by taking expectations of both sides in the above.
 \end{proof}

 We can now describe the modifications needed to complete the proof of
 \cref{maintheorem}.  These modifications are described in terms of the notation
 set up above and in the proof for the case $r = 1$ given in
 \cref{sec:proof-main-theorem}.  In particular, the $\nu$-net $\F_{L,\nu}$ of
 the subset $\F_L$ of $L$-lipschitz functions in $\F$ is as defined in that
 proof.
\begin{proof}[Proof of \cref{maintheorem}: Case $r > 1$] Most of the proof of
  the theorem remains the same, except for use of \cref{lem-Gamma-3} in the
  derivation of \cref{eq:5}, which now has to be replaced by applications of
  \cref{lem-vhat-conc,lem-vt-conc}.  First, using the decompositions in
  \cref{eq:26,eq:20} above and a union bound, we have
  \begin{multline}
    \Pr\insq{\exists f \in \F_{L,\nu} \text{ s.t. }
      \frac{1}{n}\sum_{i=1}^n\Gamma_3^{(i)}(f) \leq - \epsilon/8
    } \\ \leq
    \sum_{\ell=1}^K\inp{
      \Pr\insq{\inf\limits_{f \in \F_{L,\nu}} \frac{1}{n}\sum_{i=1}^nT_{i,\ell}\tV(f)_{i,\ell} \leq \frac{-\epsilon}{16K}}
      +
      \Pr\insq{\inf\limits_{f \in \F_{L,\nu}} \frac{1}{n}\sum_{i=1}^nT_{i,\ell}\hV(f)_{i,\ell} \leq \frac{-\epsilon}{16K}}
    }. \label{eq:27}
  \end{multline}
   By construction,
  $\F_{L,\nu} \subseteq \F$, so we bound the first term using
  \cref{lem-vt-conc}.
  \begin{equation}
    \label{eq:28}
    \begin{aligned}
      \sum_{\ell=1}^K
      \Pr\insq{\inf\limits_{f \in \F_{L,\nu}} \frac{1}{n}\sum_{i=1}^nT_{i,\ell}\tV(f)_{i,\ell} \leq \frac{-\epsilon}{16K}}
      &\leq
      \sum_{\ell=1}^K
      \Pr\insq{\inf\limits_{f \in \F}
      \frac{1}{n}\sum_{i=1}^nT_{i,\ell}\tV(f)_{i,\ell}
      \leq \frac{-\epsilon}{16K}}\\
      &  \leq  2K\compnum\exp\inp{\frac{-n(\epsilon/16)^2}{8K^2\gamma^{2}\compnum\omegadiam^2}}.
    \end{aligned}
  \end{equation}
  For the second term, we use \cref{lem-vhat-conc} and a union bound over the
  $\nu$-net $\F_{L, \nu}$ of the set of $L$-lipschitz function in $\F$, exactly
  as in the argument leading to \cref{eq:5}.  This gives,
  \begin{equation}
    \sum_{i=1}^{K}
    \Pr\insq{\inf\limits_{f \in \F_{L,\nu}} \frac{1}{n}\sum_{i=1}^nT_{i,\ell}\hV(f)_{i,\ell} \leq \frac{-\epsilon}{16K}}
    \leq K \abs{\F_{L,\nu}}
    \cdot \exp\inp{-\frac{nd(\epsilon/16)^2}{2cC^2K^2\omegadiam^2L^2\phidL^2}}.
  \end{equation}
  Together, these two computations, when substituted into \cref{eq:27}, give the
  following more general version of \cref{eq:5}.
  \begin{multline}
    \label{eq:29}
    \Pr\insq{\exists f \in \F_{L,\nu} \text{ s.t. }
      \frac{1}{n}\sum_{i=1}^n\Gamma_3^{(i)}(f) \leq - \epsilon/8
    } \leq
    2K\compnum\exp\inp{\frac{-n(\epsilon/16)^2}{8K^2\gamma^{2}\compnum\omegadiam^2}}
    +
    K \abs{\F_{L,\nu}}
    \cdot \exp\inp{-\frac{nd(\epsilon/16)^2}{2cC^2K^2\omegadiam^2L^2\phidL^2}}.
  \end{multline}
  We then proceed with the argument exactly as before, replacing all usages of
  \cref{eq:5} by \cref{eq:29}.  The final bound in \cref{eq:30} then gets
  modified to the following.
  \begin{equation}
    \Pr\insq{\exists f \in \F_L \text{ s.t. }
      \frac{1}{n}\sum_{i=1}^nZ_i(f) - \sigma_{\phi}^2 \leq - \epsilon
    }
    \leq  K \abs{\F_{L,\nu}}
    \cdot e^{-\frac{nd(\epsilon/16)^2}{2cC^2K^2\omegadiam^2L^2\phidL^2}}
    +     2K\compnum\exp\inp{\frac{-n(\epsilon/16)^2}{8K^2\gamma^{2}\compnum\omegadiam^2}}
    +  2K\sum_{j=0}^2 e^{-2n(\epsilon/8)^{2}/(M_j)^{2}}.
  \end{equation}
  As in the proof for the $\compnum = 1$ case, each term above is at most
  $\delta/5$ by the choice of the parameters.
\end{proof}

\section{Discussion}

In this paper, we gave a more comprehensive understanding of the law of robustness of Bubeck and Sellke \cite{BSisoperimetry} for interpolation by considering Bregman divergence losses. In applications, the objective of interest is usually robust generalization rather than robust interpolation. We leave the extension of the line of work of this paper to robust generalization as open. We also suggest a few specific directions of inquiry:
\begin{enumerate} 
\item To understand robust generalization in practice, \emph{local} notions of the Lipschitz constant are often tighter than the global notion \cite{huang2021training,muthukumar2023adversarial}. Bubeck and Sellke \cite{BSisoperimetry} remark that the expected squared norm of the gradient instead of the (global) Lipschitz constant does not lead to a similar law of robustness. How can we get a better understanding of robustness for local notions of Lipschitz constants? 
\item The current line of work focuses on models that overfit training data, as such models heralded the recent deep learning revolution. However, modern model training procedures often do not fall into this paradigm. In fact they may practice `early stopping', for which a theory of overparameterization has recently been proposed \cite{li2020gradient}. Connecting the line of work of robust interpolation to practical `underfitting' setups such as early stopping is an important research problem.
\end{enumerate}

\appendix

 \section{Proofs of corollaries of the main theorem}
\label{sec:proofs-coroll-main}
\begin{proof}[Proof of \cref{cor-regression}]
  The square loss is a symmetric Bregman divergence loss $D_{\phi}$ with
  $\phi(x)=\norm{x}^{2}$.  We consider the domain $\Omega = [-M, M]^K$ for
  $\phi$, which has $l_{\infty}$ diameter at most $\omegadiam=2M$. Further, on
  this domain, $\phi$ is a $2\sqrt{K}M$-Lipschitz function, so we can take
  $\phiL=2\sqrt{K}M$.  Since $\nabla \phi(x) = 2x$, we can take $\phidL = 2$.

  The set $A=\{\E[Y|x]:x\in \R^{d}\}$ is (by definition) contained in
  $[-M,M]^{K}$. We thus get
  $$a_{0}=\max_{a\in A} \norm{a} \leq m_{0}=\max_{a\in [-M, M]^{K}} \norm{a} =
  \sqrt{K}M,$$
  $$m_{2}=\max_{a\in A}\abs{\phi(a)} \leq m_{1}=\max_{a\in [-M, M]^{K}}
  \abs{\phi(a)} = KM^{2},$$
$$m_{3}=\max_{a\in A}||\nabla \phi(a)||\leq \max_{a\in [-M, M]^{K}}||\nabla \phi(a)||= 2\sqrt{K}M,\hspace{2pt} \gamma\leq 2\sqrt{K}M.$$
We have thus verified all assumptions of \cref{maintheorem}, and using that
Theorem, we now get that for any $(p,J)$-realistic function class $\F$,
w.h.p. over the sampling of $n$ independent samples (where $n$ is as in the
statement of the corollary), if there exists an $L$-Lipschitz function
$f \in \F$ that $\epsilon$-overfits the training data, then the following lower
bound on $L$ holds:
\begin{displaymath}
  L\geq \frac{\epsilon}{128CKM\sqrt{2c}}
  \sqrt{
    \frac{
      nd
    }{
      p\log (1+64JWKM)/\epsilon)
      + \log(5K/\delta)
    }
  }. \qedhere
\end{displaymath}
\end{proof}

\begin{proof}[Proof of \cref{cor-classification}]
  Let $\tilde{\Delta}_K$ denote the $K$ dimensional probability simplex.  The
  cross-entropy loss is an asymmetric Bregman divergence loss $D_{\phi}$ with
  $\phi:\tilde{\Delta}_{K}\rightarrow \mathbb{R}$ defined as
  $\phi(x) \defeq \sum_{i=1}^{K}x_{i}\log(x_{i})$.  Note also that the
  $ \ell_{\infty}$-diameter of $\Omega = \tilde{\Delta}_{K}$ is at most
  $\omegadiam=1$.  We also recall that the labels $Y$ are one-hot encoded
  vectors, and the assumption that $\Pr[Y_i = 1|x] \geq \alpha$ for all
  $i \in [K]$ and $x$. This implies that
  $A=\{\E[Y|x]=(\Pr[Y_1=1 |x],\Pr[Y_2=1|x],\ldots,\Pr[Y_K=1|x]) : x\in
  \R^{d}\}\subseteq \tilde{\Delta}_{K} \cap [\alpha, 1-\alpha)^K$.

  Note also that the range of any function in the $(p, J)$-realistic function
  class $\F$ is a subset of
  $B \defeq \{\operatorname{Softmax}(x):x\in [-M,M]^{K}\}\subset B_1 \defeq \{x\in
  \tilde{\Delta}_{K}:x_i\geq \frac{e^{-2M}}{K} \hspace{2pt}\forall 1\leq i\leq
  K\}$, and $\nabla\phi(x) = (\log(x_1)+1,\log(x_2)+1,\ldots,\log(x_K)+1)$ is
  well defined for all $x \in B_1$.  From this it follows that $\phi$ is an
  $\phiL$-Lipschitz function on $B_1$ with $\phiL=\sqrt{K}(1 + 2M + \log K)$.
  We also have
  $$a_{0}=\max_{a\in A} \norm{a} \leq m_{0}=\max_{b\in \tilde{\Delta}_{K}} \norm{b} = 1,
  $$ $$m_{2}=\max_{a\in A} \abs{\phi(a)} \leq m_{1}=\max_{b\in
    \tilde{\Delta}_{K}}  \abs{\phi(b)} \leq \log K, $$
  $$m_{3}=\max_{a\in A} \norm{\nabla \phi(a)}
  \leq \sqrt{K}(1+|\log(\alpha)|), \gamma\leq \max_{b\in B_1}\norm{\nabla
    \phi(b)} = \sqrt{K}(1 + 2M + \log K).$$

  Further, we also see that each coordinate $(\nabla\phi)_{\ell}$ of the
  gradient of $\phi$ is an $\phidL$-Lipschitz function on the set $B_1$ with
  $\phidL =Ke^{2M}$.  Thus, we have verified all assumptions of
  \cref{maintheorem}, and using that theorem, we get that for any
  $(p,J)$-realistic function class $\F$, w.h.p. over the sampling of $n$
  independent samples (where $n$ is as in the statement of the corollary), if
  there exists an $L$-Lipschitz function $f \in \F$ that $\epsilon$-overfits
  then the following lower bound on $L$ holds:
\begin{equation}
    L\geq \frac{\epsilon}{32CK^{2}e^{2M}\sqrt{2c}}
  \sqrt{
    \frac{
      nd
    }{
      p\log (1+8JW( e^{2M}K^{2} + 2\sqrt{K}(1 + 2M +  \log K))/\epsilon)
      + \log(5K/\delta)
    }
  }.
\end{equation}
This bound can however be improved if we take a careful look at the internals of
the proof of the \cref{maintheorem}.  The crucial observation is that in
\cref{lem-Gamma-3,lem-vhat-conc}, we only need the Lipschitz constant of the
composition $(\nabla \phi)_{\ell}\circ f$, which we estimate in general by
multiplying the worst-case Lipschitz constants of $\nabla \phi_{\ell}$ and
$f \in \F$.  However, in this case, for $\sf(f) \in \F$, we can get a better
direct estimate of this Lipschitz constant of $(\nabla \phi)_{\ell}\circ \sf(f)$ since
$(\nabla \phi)_{\ell}\circ \sf(f) =
\log(e^{f(x)_{\ell}}/\sum_{t=1}^{K}e^{f(x)_{t}}) + 1$ is a $2L$-Lipschitz
function from $\R^{d}$ to $\R^{K}$ whenever $f$ is an $L$-Lipschitz function. Thus, we can replace the
factor $L\phidL$ appearing in the proof of \cref{maintheorem} by $2L$.  This
propagates through the proof of \cref{maintheorem} and we thus obtain the
following w.h.p.~lower bound on the Lipschitz constant $L$ of $g$ when $\sf(g)
\in \F$ over-fits the data:
\begin{equation}
  L\geq \frac{\epsilon}{64CK\sqrt{2c}}
  \sqrt{
    \frac{
      nd
    }{
      p\log (1+8JW( e^{2M}K^{2} + {\sqrt{K}}(1 + 2M + \log K))/\epsilon)
      + \log(5K/\delta)
    }
  }.\qedhere
\end{equation}

\end{proof}

\end{document}